\def\U{\mathcal{U}}
\def\C{\mathbb{C}}
\def\R{\mathbb{R}}
\def\W{\mathbf{W}}
\newtheorem{theorem}{Theorem}
\theoremstyle{plain}
\newtheorem{corollary}{Corollary}
\newtheorem{definition}{Definition}
\newtheorem{lemma}{Lemma}
\newtheorem{proposition}{Proposition}
\numberwithin{equation}{section}
\newcommand{\WS}{generic}
\definecolor{orange}{rgb}{1,0.5,0}
\DeclareMathOperator{\rank}{rank}
\newcommand{\bracket}[1]{\langle #1 \rangle}
\begin{document}
\title{CUR Decompositions, Similarity Matrices, and Subspace Clustering}
\author{Akram Aldroubi, Keaton Hamm, Ahmet Bugra Koku, and Ali Sekmen}

\maketitle

\begin{abstract}
A general framework for solving the subspace clustering problem using the CUR decomposition is presented.  The CUR decomposition provides a natural way to construct  similarity matrices for data that come from a union of unknown subspaces $\U=\underset{i=1}{\overset{M}\bigcup}S_i$.  The similarity matrices thus constructed give the exact clustering in the noise-free case.  Additionally, this decomposition gives rise to many distinct similarity matrices from a given set of data, which allow enough flexibility to perform accurate clustering of noisy data.  We also show that two known methods for subspace clustering can be derived from the CUR decomposition. An algorithm based on the theoretical construction of similarity matrices is presented, and experiments on synthetic and real data are presented to test the method.

Additionally, an adaptation of our CUR based similarity matrices is utilized to provide a heuristic algorithm for subspace clustering; this algorithm yields the best overall performance to date for clustering the Hopkins155 motion segmentation dataset.

\end{abstract}

\begin{IEEEkeywords}
Subspace clustering, similarity matrix, CUR decomposition, union of subspaces, data clustering, skeleton decomposition, motion segmentation.
\end{IEEEkeywords}

\section{Introduction}

We present here two tales: one about the so-called CUR decomposition (or sometimes skeleton decomposition), and another about the subspace clustering problem.  It turns out that there is a strong connection between the two subjects in that the CUR decomposition provides a general framework for the similarity matrix methods used to solve the subspace clustering problem, while also giving a natural link between these methods and other minimization problems related to subspace clustering.   

The CUR decomposition is remarkable in its simplicity as well as its beauty:  one can decompose a given matrix $A$ into the product of three matrices, $A=CU^\dagger R$, where $C$ is a subset of  columns  of $A$, $R$ is a subset of rows of $A$, and $U$ is their intersection (see Theorem \ref{CURdecomp} for a precise statement).  The primary uses of the CUR decomposition to date are in the field of scientific computing.  In particular, it has been used as a low-rank approximation method that is more faithful to the data structure than other factorizations \cite{Mahoney09,Boutsidis17}, an approximation to the singular value decomposition  \cite{Drineas06I,Drineas06II,Drineas06III}, and also has provided efficient algorithms to compute with and store massive matrices in memory.  In the sequel, it will be shown that this decomposition is the source of some well-known methods for solving the subspace clustering problem, while also adding the construction of many similarity matrices based on the data.

The subspace clustering problem may be stated as follows: suppose that some collected data vectors in $\mathbb{K}^m$ (with $m$ large, and $\mathbb{K}$ being either $\R$ or $\C$)  comes from a union of linear subspaces (typically low-dimensional) of $\mathbb{K}^m$, which will be denoted by $\U=\underset{i=1}{\overset{M}\bigcup}S_i$.  However, one does not know {\em a priori} what the subspaces are, or even how many of them there are.  Consequently, one desires to determine the number of subspaces represented by the data, the dimension of each subspace, a basis for each subspace, and finally to cluster the data: the data $\{w_j\}_{j=1}^n\subset\U$ are not ordered in any particular way, and so clustering the data means to determine which data belong to the same subspace.

There are indeed physical systems which fit into the model just described.  Two particular examples are motion tracking and facial recognition.  For example, the Yale Face Database B \cite{Yale} contains images of faces, each taken with 64 different illumination patterns.  Given a particular subject $i$,  there are 64 images of their face illuminated differently, and each image represents a vector lying approximately in a low-dimensional linear subspace, $S_i$, of the higher dimensional space $\R^{307,200}$ (based on the size of the greyscale images).  It has been experimentally shown that images of a given subject approximately lie in a subspace $S_i$ having dimension 9 \cite{Basri01}.  Consequently, a data matrix obtained from facial images under different illuminations has columns which lie in the union of low-dimensional subspaces, and one would desire an algorithm which can sort, or cluster, the data, thus recognizing which faces are the same.

There are many avenues of attack to the subspace clustering problem, including iterative and statistical methods \cite{Kanatani03, Akram09,Akram08,Tseng00,Bolles81,Silva08,Zhang10,Zhang12}, algebraic methods \cite{Vidal05, Ma08, Vidal17}, sparsity methods \cite{Eldar09,Elhamifar09,Elhamifar10,Liu10}, minimization problem methods inspired by compressed sensing \cite{Liu10, Liu10_2}, and methods based on spectral clustering \cite{Elhamifar09,Elhamifar10,Luxburg07, Lerman09_3,Schnorr09,Yan06,Vidal07_2, Lerman09_4}.  For a thorough, though now incomplete, survey on the spectral clustering problem, the reader is invited to consult \cite{Vidal10}.

Some of the methods mentioned above begin by finding a {\em similarity matrix} for a given set of data, i.e. a square matrix whose entries are nonzero precisely when the corresponding data vectors lie in the same subspace, $S_i$, of $\mathcal{U}$ (see Definition \ref{DefSIM} for the precise definition).  The present article is concerned with a certain matrix factorization method -- the CUR decomposition -- which provides a quite general framework for finding a similarity matrix for data that fits the subspace model above.  It will be demonstrated that the CUR decomposition indeed produces many similarity matrices for subspace data.  Moreover, this decomposition provides a bridge between matrix factorization methods and the minimization problem methods such as Low-Rank Representation \cite{Liu10, Liu10_2}.

\subsection{Paper Contributions}

\begin{itemize}
\item  In this work, we show that the CUR decomposition gives rise to  similarity matrices for clustering data that comes from a union of independent subspaces.  Specifically, given the data matrix $\W=[w_1 \dotsb w_n] \subset \mathbb{K}^m$  drawn from a union $\mathcal{U}=\bigcup_{i=1}^{M}S_i$ of independent subspaces $\left\{S_i\right\}_{i=1}^{M}$ of  dimensions $\left\{d_i\right\}_{i=1}^{M}$, any CUR decomposition $\W = C U^{\dagger} R$  can be used to construct a similarity matrix for $\W$. In particular, if $Y=U^\dagger R$ and $Q$ is the element-wise binary or absolute value version of $Y^*Y$, then $\Xi_{\W} = Q^{d_{max}}$ is a similarity matrix for $\W$; i.e. $\Xi_{\W}(i,j) \neq 0$ if the columns $w_i$ and $w_j$ of $\W$ come from the same subspace, and $\Xi_{\W}(i,j) = 0$ if the columns $w_i$ and $w_j$ of $\W$ come from different subspaces.

\item This paper extends our previous framework for finding similarity matrices for clustering data that comes from the union of independent subspaces. In \cite{Akram16}, we showed that  any factorization $\W = B P$, where the columns of $B$ come from $\mathcal{U}$ and form a basis for the column space of $\W$, can be used to produce a similarity matrix $\Xi_{\W}$. This work shows that we do not need to limit the factorization of $\W$ to bases, but may extend it to frames, thus allowing for more flexibility.

\item Starting from the CUR decomposition framework, we demonstrate that some well-known methods utilized in subspace clustering follow as special cases, or are tied directly to the CUR decomposition; these methods include the \textit{Shape Interaction Matrix} \cite{Costeira98,Ji15} and \textit{Low-Rank Representation \cite{Liu10, Liu10_2}}.

\item  A proto-algorithm is presented which modifies the similarity matrix construction mentioned above to allow clustering of noisy subspace data.  Experiments are then conducted on synthetic and real data (specifically, the Hopkins155 motion dataset) to justify the proposed theoretical framework.  It is demonstrated that using an average of several CUR decompositions to find similarity matrices for a data matrix $\W$ outperforms many known methods in the literature while being computationally fast.

\item A clustering algorithm based on the methodology of the {\em Robust Shape Interaction Matrix} of Ji, Salzmann, and Li \cite{Ji15} is also considered, and using our CUR decomposition framework together with their algorithm yields the best performance to date for clustering the Hopkins155 motion dataset.
\end{itemize}

\subsection{Layout}

The rest of the paper develops as follows: a brief section on preliminaries is followed by the statement and discussion of the most general exact CUR decomposition.  Section \ref{SECCURSeg} contains the statements of the main results of the paper, while Section \ref{SECCases} contains the relation of the general framework that CUR gives for solving the subspace clustering problem.  The proofs of the main theorems are enumerated in Section \ref{SECProofs} followed by our algorithm and numerical experiments in Section \ref{SECExperiments}, whereupon the paper concludes with some discussion of future work.  

\section{Preliminaries}
\subsection{Definitions and Basic Facts}

Throughout the sequel, $\mathbb{K}$ will refer to either the real or complex field ($\R$ or $\C$, respectively).   For $A\in\mathbb{K}^{m\times n}$, its {\em Moore--Penrose psuedoinverse} is the unique matrix $A^\dagger\in\mathbb{K}^{n\times m}$ which satisfies the following conditions: 
\begin {enumerate} \item  $AA^\dagger A = A,$ 
\item $A^\dagger AA^\dagger = A^\dagger,$ 
\item $(AA^\dagger)^*=AA^\dagger,$ and 
\item $(A^\dagger A)^*=A^\dagger A.$ 
\end{enumerate}
 Additionally, if $A=U\Sigma V^*$ is the Singular Value Decomposition of $A$, then $A^\dagger=V\Sigma^\dagger U^*$, where the  pseudoinverse of  the $m \times n$ matrix $\Sigma = \text{diag}(\sigma_1,\dots,\sigma_r,0,\dots,0)$ is the $n\times m$ matrix $\Sigma^\dagger=\text{diag}(1/\sigma_1,\dots,1/\sigma_r, 0\dots,0)$.  For these and related notions, see Section 5 of \cite{GolubVanLoan}.

Also of utility to our analysis is that a rank $r$ matrix has a so-called {\em skinny SVD} of the form $A=U_r\Sigma_r V_r^*$, where $U_r$ comprises the first $r$ left singular vectors of $A$, $V_r$ comprises the first $r$ right singular vectors, and $\Sigma_r = \text{diag}(\sigma_1,\dots,\sigma_r)\in\mathbb{K}^{r\times r}$.  Note that in the case that $\rank(A)>r$, the skinny SVD is simply a low-rank {\em approximation} of $A$.

\begin{definition}[Independent Subspaces]
\label{indp:subspaces}
Non-trivial subspaces $\{S_i \subset \mathbb{K}^m\}_{i=1}^{M}$ are called {\em independent} if their dimensions satisfy the following relationship:
\[dim(S_1+\dots+S_M) = dim(S_1)+\dots+dim(S_M) \leq m.\]
\end{definition}
The definition above is equivalent to the property  that any set of non-zero vectors $\{w_1,\dotsb,w_M\}$ such that $w_i\in S_i$, $i=1,\dots,M$ is linearly independent.

\begin{definition}[Generic Data]
 Let $S$ be a  linear subspace of $\mathbb{K}^{m}$ with dimension $d$. A set of data $\W$ drawn from $S$ is said to be {\em \WS} if  (i) $|\W|>d$, and (ii)  every  $d$ vectors from  $\W$  form a basis for $S$.
\end{definition}

Note this definition is equivalent to the frame-theoretic description that the columns of $\W$ are a frame for $S$ with spark $d+1$ (see \cite{Alexeev12,Donoho03PNAS}).  It is also sometimes said that the data $\W$ is in {\em general position}.

\begin{definition}[Similarity Matrix] \label{DefSIM}
Suppose $\W=[w_1 \dotsb w_n] \subset \mathbb{K}^m$  has columns drawn from a union of subspaces $\mathcal{U}=\bigcup_{i=1}^{M}S_i$. We say $\Xi_{\W}$ is a {\em similarity matrix for} $\W$ if and only if (i) $\Xi_{\W}$ is symmetric, and (ii) $\Xi_{\W}(i,j) \neq 0$ if and only if $w_i$ and $w_j$ come from the same subspace.
\end{definition}

Finally, if $A\in\mathbb{K}^{m\times n}$, we define its {\em absolute value version} via $\text{abs}(A)(i,j)=|A(i,j)|$, and its {\em binary version} via  $\text{bin}(A)(i,j) = 1$ if $A(i,j)\neq0$ and $\text{bin}(A)(i,j)=0$ if $A(i,j)=0.$

\subsection{Assumptions}\label{SECAssumptions}

 In the rest of what follows, we will assume that $\mathcal{U}=\bigcup_{i=1}^{M}S_i$ is a nonlinear set consisting of the union of non-trivial, independent, linear subspaces $\left\{S_i\right\}_{i=1}^{M}$ of $\mathbb{K}^{m}$, with corresponding  dimensions $\left\{d_i\right\}_{i=1}^{M}$, with $d_{max} := \max_{1\leq i\leq M}d_i$. We will assume that the data matrix $\W=[w_1 \dotsb w_n] \in \mathbb{K}^{m \times n}$ has column vectors that are drawn  from $\mathcal{U}$, and that the data drawn from each subspace $S_i$ is generic for that subspace.

\section{CUR decomposition}

Our first tale is the  remarkable CUR matrix decomposition, also  known as the skeleton decomposition \cite{Goreinov97,Demanet13} whose proof can be obtained by basic linear algebra. 
\begin{theorem} 
\label{CURdecomp}Suppose  $A \in \mathbb{\mathbb K}^{m\times n}$ has rank $r$. Let $I\subset\{1,\dots,m\}$, $J\subset \{1,\dots,n\}$ with $|I|=s$ and $|J|=k$, and let $C$  be the $m\times k$ matrix whose columns are  the columns of $A$ indexed by $J$.  Let $R$ be the $s\times n$ matrix whose rows are  the rows of $A$ indexed by $I$. Let $U$ be the $s\times k$ sub-matrix of $A$ whose entries are indexed by $I\times J$. If $\rank(U)=r$, then $A=CU^\dagger R$. 
\end{theorem}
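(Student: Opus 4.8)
The plan is to reduce the identity $A = C U^\dagger R$ to a block computation. After relabeling rows and columns so that $I = \{1,\dots,s\}$ and $J = \{1,\dots,k\}$ are initial segments --- a cosmetic permutation that does not affect the identity to be proved --- I would write $A$ in the block form
\[
A = \begin{pmatrix} U & B \\ D & E\end{pmatrix},
\]
so that $C = \begin{pmatrix} U \\ D\end{pmatrix}$ and $R = \begin{pmatrix} U & B\end{pmatrix}$. Multiplying out then gives
\[
C U^\dagger R = \begin{pmatrix} U U^\dagger U & U U^\dagger B \\ D U^\dagger U & D U^\dagger B\end{pmatrix},
\]
and the theorem reduces to checking that each of the four blocks equals the corresponding block of $A$.

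The crucial preliminary observation is the pair of rank equalities $\rank(C) = \rank(R) = r$. Since the columns of $C$ are columns of $A$ we have $\text{col}(C)\subseteq\text{col}(A)$, so $\rank(C)\le r$; on the other hand $U$ is a submatrix of $C$, so $r = \rank(U)\le\rank(C)$, forcing $\rank(C)=r$, and symmetrically $\rank(R)=r$. Because $\rank(U)=\rank(C)$ and $U$ is the top block of rows of $C$, the rows of $U$ span the row space of $C$; hence every row of $D$ lies in the row space of $U$, and $D = F U$ for some matrix $F$. Dually, $\rank(U)=\rank(R)$ forces the columns of $U$ to span the column space of $R$, so $\text{col}(B)\subseteq\text{col}(U)$, and therefore $U U^\dagger B = B$ because $U U^\dagger$ is the orthogonal projection onto $\text{col}(U)$ (a consequence of the Moore--Penrose axioms). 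The symmetric computation gives $D U^\dagger U = D$.

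Granting these facts, the three easy blocks are immediate: the top-left block is $U U^\dagger U = U$ by the first Moore--Penrose axiom, the top-right is $U U^\dagger B = B$, and the bottom-left is $D U^\dagger U = D$. The remaining bottom-right block, where I must verify $D U^\dagger B = E$, is the genuinely delicate one. For it I would invoke $\text{col}(C)=\text{col}(A)$ once more: the last $n-k$ columns $\begin{pmatrix} B \\ E\end{pmatrix}$ of $A$ lie in $\text{col}(C)$, so $\begin{pmatrix} B \\ E\end{pmatrix} = C H$ for some $H$, which unpacks into $B = U H$ and $E = D H$. Substituting $D = F U$ and using $U U^\dagger B = B$ then yields
\[
D U^\dagger B = F U U^\dagger B = F B = F(U H) = (F U) H = D H = E,
\]
completing the verification.

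I expect this final block to be the main obstacle. The other three are near-tautologies once the rank equalities are established, whereas matching $E$ requires combining the row-space containment for $D$ and the column-space containment for $B$ simultaneously with the projection identity $U U^\dagger B = B$. The key subtlety is that $U$ need not be square or invertible (when $s>r$ or $k>r$, the factor $H$ with $B = UH$ is far from unique), so the argument must never solve for $H$: it uses only the existence of the factorizations $D = FU$ and $\begin{pmatrix} B \\ E\end{pmatrix}=CH$ together with the pseudoinverse identities.
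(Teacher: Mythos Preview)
Your proof is correct. The block computation is clean, and the handling of the bottom-right block via the simultaneous factorizations $D = FU$ and $\begin{pmatrix}B\\E\end{pmatrix} = CH$ is exactly what is needed; the chain $D U^\dagger B = F U U^\dagger B = F B = F U H = D H = E$ is valid.

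The paper's argument is organized differently, though it rests on the same rank equalities $\rank(C)=\rank(R)=\rank(U)=r$. Rather than a four-block verification, the paper writes $A = CX$ for some $X$ (since the columns of $C$ span the column space of $A$), applies the row selection $P_I$ to get $R = UX$, and then argues that because $\rank(U)=\rank(C)$ the null spaces of $U$ and $C$ coincide, so \emph{any} solution of $R = UX$ is automatically a solution of $A = CX$. It then checks that $Y = U^\dagger R$ solves $R = UY$ by a dual reduction: applying the column selection $P_J$ collapses this to $U U^\dagger U = U$. Your approach is a direct, coordinatewise verification using the projection identities $U U^\dagger B = B$ and $D U^\dagger U = D$; the paper's is an equation-solving reduction that avoids ever writing out the $E$ block explicitly. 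Both are short; the paper's is slightly slicker in that it never isolates the ``delicate'' block, while yours is more transparent about where each rank hypothesis is consumed.
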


\begin{proof} Since $U$ has rank $r$, $\rank (C)= r$.  Thus the columns of $C$ form a frame for the column space of $A$, and we have $A=CX$ for some (not necessarily unique) $k\times n$ matrix $X$. Let $P_I$ be an $s\times m$ row selection matrix such that $R=P_IA$; then we have $R=P_IA=P_IC X$.  Note also that $U=P_IC$, so that the last equation can then be written as $R=UX$. Since $\rank(R)=r$, any solution to $R=UX$ is also a solution to $A=CX$. Thus the conclusion of the theorem follows upon observing that $Y=U^\dagger R$ is a solution to $R=UX$. Indeed, the same argument as above implies that $U^\dagger R$ is a solution to $R=UX$ if and only if it is a solution to $RP_J=U = UXP_J$ where $P_J$ is a $n\times k$ column-selection matrix which picks out columns according to the index set $J$.  Thus, noting that $UU^\dagger RP_J = UU^\dagger U = U$ completes the proof, whence $A=CY=CU^\dagger R$.
\end{proof} 

Note that the assumption on the rank of $U$ implies that $k,s\geq r$ in the theorem above.  While this theorem is quite general, it should be noted that in some special cases, it reduces to a much simpler decomposition, a fact that is recorded in the following corollaries.  The proof of each corollary follows from the fact that the pseudoinverse $U^\dagger$ takes those particular forms whenever the columns or rows are linearly independent (\cite[p. 257]{GolubVanLoan}, for example).

\begin{corollary}
Let $A, C, U$, and $R$ be as in Theorem \ref{CURdecomp} with  $C\in\mathbb{K}^{m\times r}$; in particular, the columns of $C$ are linearly independent.  Then $A=C(U^*U)^{-1}U^*R.$
\end{corollary}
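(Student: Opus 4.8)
The plan is to invoke Theorem \ref{CURdecomp} directly and then simplify the pseudoinverse $U^\dagger$ using the additional hypothesis on $C$. First I would observe that the assumption $C\in\mathbb{K}^{m\times r}$ forces the number of selected columns to be $k=r$, so that $U$ is an $s\times r$ matrix. Since Theorem \ref{CURdecomp} already requires $\rank(U)=r$, the matrix $U$ thus has full column rank, and its columns are linearly independent.

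The key step is to establish the explicit form of the pseudoinverse for a full-column-rank matrix, namely $U^\dagger=(U^*U)^{-1}U^*$. Because $U$ has full column rank, the $r\times r$ Gram matrix $U^*U$ is invertible, so the expression $(U^*U)^{-1}U^*$ is well defined. I would verify that this candidate satisfies the four defining conditions of the Moore--Penrose pseudoinverse recorded in the Preliminaries: the identities $UU^\dagger U=U$ and $U^\dagger UU^\dagger=U^\dagger$ follow from the cancellation $(U^*U)^{-1}(U^*U)=I$, while the two Hermitian conditions hold because $U^\dagger U=I$ is trivially self-adjoint and $UU^\dagger=U(U^*U)^{-1}U^*$ is self-adjoint thanks to $(U^*U)^*=U^*U$. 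Alternatively, as the authors note, this identity may simply be cited from \cite[p. 257]{GolubVanLoan}.

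Finally I would substitute this expression into the conclusion $A=CU^\dagger R$ of Theorem \ref{CURdecomp}, yielding $A=C(U^*U)^{-1}U^*R$, as claimed. I do not anticipate any genuine obstacle here, since the corollary is a direct specialization of the theorem; the only point requiring care is confirming that the column-rank hypothesis on $C$ indeed translates into full column rank for $U$ (i.e.\ that $k=r$), which is immediate from comparing dimensions.
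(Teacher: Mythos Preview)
Your proposal is correct and follows exactly the paper's own argument: the paper simply remarks that the corollary follows from Theorem \ref{CURdecomp} together with the standard fact (citing \cite[p.~257]{GolubVanLoan}) that $U^\dagger=(U^*U)^{-1}U^*$ when $U$ has linearly independent columns. Your explicit verification of the four Moore--Penrose conditions is a slight elaboration beyond what the paper provides, but the approach is identical.
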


\begin{corollary}
Let $A, C, U,$ and $R$ be as in Theorem \ref{CURdecomp} with $R\in\mathbb{K}^{r\times n}$; in particular, the rows of $R$ are linearly independent.  Then $A = CU^*(UU^*)^{-1}R.$
\end{corollary}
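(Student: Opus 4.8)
The plan is to reduce the statement directly to Theorem \ref{CURdecomp} and then exploit the special structure that the hypothesis forces on the pseudoinverse $U^\dagger$. Since $R\in\mathbb{K}^{r\times n}$, the number of selected rows is $s=|I|=r$; combined with the standing assumption $\rank(U)=r$ (and the observation noted after the theorem that $k\geq r$), this means $U$ is an $r\times k$ matrix of full row rank. My overall strategy is therefore: first invoke Theorem \ref{CURdecomp} to write $A=CU^\dagger R$, and then replace $U^\dagger$ by the closed-form right-inverse expression $U^*(UU^*)^{-1}$, which is valid precisely when $U$ has full row rank.

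The heart of the argument is establishing that $U^\dagger=U^*(UU^*)^{-1}$. First I would observe that because $U$ has full row rank $r$, the Gram matrix $UU^*\in\mathbb{K}^{r\times r}$ has rank $r$ and is hence invertible, so the candidate $G:=U^*(UU^*)^{-1}$ is well defined. I would then verify that $G$ satisfies the four defining Moore--Penrose conditions from the Preliminaries. The computation is short: $UG=UU^*(UU^*)^{-1}=I_r$ is Hermitian, which immediately yields conditions (1) and (3); condition (2) follows from $GUG=G(UG)=GI_r=G$; and condition (4) follows from $(GU)^*=U^*\bigl((UU^*)^{-1}\bigr)^*U=U^*(UU^*)^{-1}U=GU$, where the middle equality uses that $UU^*$ is Hermitian. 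By uniqueness of the pseudoinverse, $G=U^\dagger$; alternatively one may simply cite the standard formula for the pseudoinverse of a full-row-rank matrix, as the authors do via \cite{GolubVanLoan}.

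Substituting this identity into the conclusion $A=CU^\dagger R$ of Theorem \ref{CURdecomp} yields $A=CU^*(UU^*)^{-1}R$, as claimed. I expect no genuine obstacle: the only points needing care are the bookkeeping that $R\in\mathbb{K}^{r\times n}$ really does force $s=r$ (so that $U$ is full row rank, rather than merely of rank $r$ inside a taller matrix) and the passage from full row rank to invertibility of $UU^*$, both of which are immediate. This mirrors the companion corollary, where the dual hypothesis $C\in\mathbb{K}^{m\times r}$ instead makes $U$ full column rank and produces the left-inverse form $U^\dagger=(U^*U)^{-1}U^*$.
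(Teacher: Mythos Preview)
Your proposal is correct and follows essentially the same approach as the paper: apply Theorem~\ref{CURdecomp} to obtain $A=CU^\dagger R$, then use the standard fact (which the paper simply cites from \cite{GolubVanLoan}) that a full-row-rank matrix $U$ has $U^\dagger=U^*(UU^*)^{-1}$. Your explicit verification of the four Moore--Penrose conditions is more detailed than what the paper provides, but the route is identical.
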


\begin{corollary}\label{CORCUR}
Let $A, C, U,$ and $R$ be as in Theorem \ref{CURdecomp} with $U\in\mathbb{K}^{r\times r}$; in particular, $U$ is invertible.  Then $A = CU^{-1}R.$
\end{corollary}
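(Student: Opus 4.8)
The plan is to derive this corollary directly from Theorem \ref{CURdecomp} by recognizing that, in the special case under consideration, the Moore--Penrose pseudoinverse $U^\dagger$ collapses to the ordinary inverse $U^{-1}$. First I would note that the hypotheses of Theorem \ref{CURdecomp} are in force: $A$ has rank $r$ and $\rank(U)=r$, so the theorem applies verbatim and gives $A=CU^\dagger R$. Since now $U\in\mathbb{K}^{r\times r}$ is a square matrix of full rank $r$, it is invertible and the ordinary inverse $U^{-1}$ exists, so the only remaining task is to replace $U^\dagger$ by $U^{-1}$ in this expression.

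The crux of the argument is therefore the identity $U^\dagger=U^{-1}$, which I would justify by appealing to the uniqueness of the Moore--Penrose pseudoinverse. It suffices to check that $U^{-1}$ satisfies the four defining conditions listed in the preliminaries. Indeed, $UU^{-1}U=U$ and $U^{-1}UU^{-1}=U^{-1}$ are immediate, while $(UU^{-1})^*=I^*=I=UU^{-1}$ and $(U^{-1}U)^*=I^*=I=U^{-1}U$ follow because $UU^{-1}=U^{-1}U=I$. By uniqueness of the matrix satisfying these four conditions, we conclude $U^\dagger=U^{-1}$. Substituting into the conclusion of Theorem \ref{CURdecomp} then yields $A=CU^\dagger R=CU^{-1}R$, as desired.

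I expect no genuine obstacle in this argument; it is purely a specialization of the already-proved general theorem. The only point requiring any care is the reduction $U^\dagger=U^{-1}$ for an invertible square matrix, and this is a standard fact about the pseudoinverse that can simply be cited (see \cite{GolubVanLoan}) rather than reproved. Indeed, the remark immediately preceding the corollaries already flags exactly this mechanism---that the pseudoinverse assumes a simplified closed form whenever the relevant submatrix has full column or row rank---so the corollary should be presented as an immediate consequence.
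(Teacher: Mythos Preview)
Your proposal is correct and follows exactly the approach the paper takes: the paper simply remarks that all three corollaries follow because the pseudoinverse $U^\dagger$ assumes the corresponding closed form when the columns or rows are linearly independent, citing \cite[p.~257]{GolubVanLoan}. Your verification of the four Moore--Penrose conditions is a bit more explicit than necessary, but it is precisely the mechanism the paper invokes.
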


In most sources, the decomposition of Corollary \ref{CORCUR} is what is called the skeleton or CUR decomposition, \cite{Goreinov97_2}, though the case when $k=s>r$ has been treated in \cite{Caiafa10}.  The statement of Theorem \ref{CURdecomp} is the most general version of the exact CUR decomposition.

The precise history of the CUR decomposition is somewhat difficult to discern.  Many articles cite Gantmacher \cite{Gantmacher60}, though the authors have been unable to find the term skeleton decomposition therein.  However, the decomposition does appear implicitly (and without proof) in a paper of Penrose from 1955, \cite{Penrose}. However,  Perhaps the modern starting point of interest in this decomposition is the work of Goreinov, Tyrtyshnikov, and Zamarashkin \cite{Goreinov97,Goreinov97_2}.  They begin with the CUR decomposition as in Corollary \ref{CORCUR}, and study the error $\|A-CU^{-1}R\|_2$ in the case that $A$ has rank larger than $r$, whereby the decomposition $CU^{-1}R$ is only approximate.  Additionally, they allow more flexibility in the choice of $U$ since computing the inverse directly may be computationally difficult. See also \cite{Stewart99,Berry05,Wang13}.

More recently, there has been renewed interest in this decomposition.  In particular, Drineas, Kannan, and Mahoney \cite{Drineas06III} provide two randomized algorithms which compute an approximate CUR factorization of a given matrix $A$.  Moreover, they provide error bounds based upon the probabilistic method for choosing $C$ and $R$ from $A$.  It should also be noted that their middle matrix $U$ is not $U^\dagger$ as in Theorem \ref{CURdecomp}.  Moreover, Mahoney and Drineas \cite{Mahoney09} give another CUR algorithm based on a  way of selecting columns which provides nearly optimal error bounds for $\|A-CUR\|_F$ (in the sense that the optimal rank $r$ approximation to any matrix $A$ in the Frobenius norm is its skinny SVD of rank $r$, and they obtain error bounds of the form $\|A-CUR\|_F\leq(2+\varepsilon)\|A-U_r\Sigma_rV_r^T\|_F$).  They also note that the CUR decomposition should be favored in analyzing real data that is low dimensional because the matrices $C$ and $R$ maintain the structure of the data, and the CUR decomposition actually admits an viable interpretation of the data as opposed to attempting to interpret singular vectors of the data matrix, which are generally linear combinations of the data.  See also \cite{Drineas08}.

Subsequently, others have considered algorithms for computing CUR decompositions which still provide approximately optimal error bounds in the sense described above; see, for example, \cite{Boutsidis17,Demanet13,Voronin17,Wang16,Oswal17}.  For applications of the CUR decomposition in various aspects of data analysis across scientific disciplines, consult \cite{Li10CUR,Yip14,Yang15,Xu15}.  Finally, a very recent paper discusses a convex relaxation of the CUR decomposition and its relation to the joint learning problem \cite{CURTPAMI}.

 It should be noted that the CUR decomposition is one of a long line of matrix factorization methods, many of which take a similar form.  The main idea is to write $A=BX$ for some less complicated or more structured matrices $B$ and $X$.  In the case that $B$ consists of columns of $A$, this is called the \textit{interpolative decomposition}, of which CUR is a special case.  Alternative methods include the classical suspects such as $LU$, $QR$ and singular value decompositions.  For a thorough treatment of such matters, the reader is invited to consult the excellent survey \cite{HMT}.  In general, low rank matrix factorizations find utility in a broad variety of applications, including copyright security \cite{Muhammad15,Muhammad18Watermark}, imaging \cite{CandesMRI}, and matrix completion \cite{CandesCompletion} to name a very few.

\section{Subspace Clustering via CUR Decomposition}\label{SECCURSeg}

Our second tale is one of the utility of the CUR decomposition in the similarity matrix framework for solving the subspace segmentation problem discussed above.  Prior works have typically focused on CUR as a low-rank matrix approximation method which has a low cost, and also remains more faithful to the initial data than the singular value decomposition.  This perspective is quite useful, but here we demonstrate what appears to be the first application in which CUR is responsible for an overarching framework, namely subspace clustering.

As mentioned in the introduction, one approach to clustering subspace data is to find a similarity matrix from which one can simply read off the clusters, at least when the data exactly fits the model and is considered to have no noise.  The following theorem provides a way to find many similarity matrices for a given data matrix $\W$, all stemming from different CUR decompositions (recall that a matrix has very many CUR decompositions depending on which columns and rows are selected).

\begin{theorem}\label{theoremsimilarity}
 Let $\W=[w_1 \dotsb w_n] \in \mathbb{K}^{m \times n}$ be a rank $r$ matrix  whose columns are drawn from $\mathcal{U}$ which satisfies the assumptions in Section \ref{SECAssumptions}. Let $\W$ be factorized as $\W = C U^{\dagger} R$ where $C \in \mathbb{K}^{m\times k}$, $R \in \mathbb{K}^{s \times n}$, and $U\in\mathbb{K}^{s\times k}$ are as in Theorem \ref{CURdecomp}, and let $Y=U^\dagger R$ and $Q$ be either the binary or absolute value version of $Y^*Y$. Then, $\Xi_{\W} = Q^{d_{max}}$ is a similarity matrix for $\W$.
\end{theorem}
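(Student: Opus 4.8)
The plan is to verify the two defining properties of a similarity matrix in Definition \ref{DefSIM} separately, after first recording the structural consequences of the hypotheses. Symmetry of $\Xi_{\W}$ is immediate: $Y^*Y$ is Hermitian, so both its absolute value and binary versions are symmetric, and every power of a symmetric nonnegative matrix is symmetric. The substance is therefore the support condition, and for this I would write $\W = CY$ with $Y = U^\dagger R$, observe that the columns of $C$ are themselves columns of $\W$ and hence lie in $\mathcal{U}$, and group them by the subspace $S_i$ to which they belong, writing $C = [C^{(1)} \dotsb C^{(M)}]$ and correspondingly splitting each coefficient vector $y_j$ (the $j$th column of $Y$) into blocks $y_j^{(i)}$.

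\emph{Step one: distinct subspaces give zero.} The key structural fact is that independence of the $S_i$ forces $\ker C = \bigoplus_{i=1}^M \ker C^{(i)}$, since $\sum_i C^{(i)} x^{(i)} = 0$ with $C^{(i)} x^{(i)} \in S_i$ makes each summand vanish by independence. I would combine this with two properties of $Y = U^\dagger R$: first $CY = \W$ by Theorem \ref{CURdecomp}; second, each column satisfies $y_j \in \image(U^*) = (\ker U)^\perp$, and because $U = P_I C$ gives $\ker C \subseteq \ker U$ while both $C$ and $U$ have $k$ columns and rank $r$, we get $\ker U = \ker C$, so $y_j \perp \ker C$. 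Now if $w_j \in S_{i_0}$, then $C y_j = w_j \in S_{i_0}$ together with independence forces $C^{(i)} y_j^{(i)} = 0$, i.e. $y_j^{(i)} \in \ker C^{(i)}$, for every $i \neq i_0$; but $y_j \perp \ker C$ gives $y_j^{(i)} \perp \ker C^{(i)}$, so $y_j^{(i)} = 0$ for $i \neq i_0$. Thus $Y$, and hence $Y^*Y$ and $Q$, is block diagonal with respect to the subspace partition of the data; block-diagonality is preserved under taking powers, so $\Xi_{\W}(i,j) = Q^{d_{max}}(i,j) = 0$ whenever $w_i$ and $w_j$ lie in different subspaces.

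\emph{Step two: same subspace gives nonzero.} Fix a block $S_{i_0}$ of dimension $d := d_{i_0}$. By step one all coefficient vectors attached to data in $S_{i_0}$ live in the single block, inside the $d$-dimensional space $V := \image((C^{(i_0)})^*)$, and applying $C^{(i_0)}$ to a vanishing linear combination shows that any $d$ of these coefficient vectors are linearly independent precisely because the corresponding data vectors are (genericity). I would then read $Q$ restricted to this block as the adjacency matrix, with loops, of the graph on the data in $S_{i_0}$ in which $w_a \sim w_b$ iff $\langle y_a, y_b\rangle \neq 0$; since $\langle y_j, y_j\rangle = \|y_j\|^2 > 0$ every vertex carries a self-loop, so $Q^{d_{max}}(a,b) \neq 0$ is equivalent to the existence of a walk from $w_a$ to $w_b$ of length at most $d_{max}$ (short walks are padded to length $d_{max}$ using loops, and nonnegativity of $Q$ forbids cancellation). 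It therefore suffices to bound the diameter of this graph by $d \le d_{max}$.

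The diameter bound is the crux of the argument and the step I expect to be the main obstacle, since it must rely on the combinatorial genericity (any $d$ coefficient vectors independent) rather than on any measure-theoretic general position. I would first note connectivity: a splitting of the vertices into two mutually non-adjacent (hence mutually orthogonal) parts would yield spans whose dimensions sum to more than $d$ by genericity, impossible inside the $d$-dimensional space $V$. Then I would layer the connected graph by distance from a fixed vertex $y_a$ and use the observation that every vertex at distance $\geq t+2$ is orthogonal to the whole ball of radius $t$, hence to its span. If the diameter exceeded $d$, the layers $L_0, \dots, L_{d-1}$ would all be nonempty, and selecting one vector from each produces $d$ vectors that are linearly independent by genericity, forcing $\Span(L_0 \cup \dots \cup L_{d-1}) = V$; but any vertex beyond radius $d$ lies in $V$ and is orthogonal to that ball, hence orthogonal to itself, a contradiction. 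This yields diameter $\le d \le d_{max}$, so $Q^{d_{max}}$ has only nonzero entries within each block, completing the verification that $w_i$ and $w_j$ in the same subspace give $\Xi_{\W}(i,j) \neq 0$.
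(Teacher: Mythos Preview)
Your proof is correct and follows the same overall architecture as the paper's: establish block-diagonality of $Y$ (hence of $Q$) with respect to the subspace partition, show that within each block the coefficient vectors form a generic set in a $d_i$-dimensional space, and then bound the diameter of the associated graph by $d_i \le d_{\max}$.

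Your Step one is essentially the paper's proof of Theorem \ref{theoremframes} rephrased. The paper argues that $U^\dagger r$ is the minimal-norm solution of $Ux = r$ and compares its norm to that of the candidate $\tilde y$ with zeroed off-blocks; you instead invoke the equivalent characterization $U^\dagger r \in (\ker U)^\perp$, identify $\ker U = \ker C$ via the rank hypothesis, and note that the off-blocks lie in $\ker C^{(i)} \subset \ker C$. Same fact, different packaging.

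The one place you genuinely diverge is the diameter bound. The paper (Lemma \ref{lemmageneric2}) first handles the base case of exactly $d+1$ generic vectors by a component-counting argument, and then for general $N$ embeds any pair $p,q$ into a generic $(d+1)$-subset to obtain a path of length at most $d$. Your layering argument is different and more direct: vertices at distance at least $t+2$ from the root are orthogonal to the entire ball of radius $t$, so if the diameter exceeded $d$ one could select one vector from each of the first $d$ layers, span all of $V$ by genericity, and force any more distant vertex to be orthogonal to $V$ and hence zero. Both routes are valid; yours avoids the reduction to a base case and delivers the diameter bound in one stroke, while the paper's approach makes the combinatorics of the minimal $(d+1)$-point configuration more explicit.
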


The key ingredient in the proof of Theorem \ref{theoremsimilarity} is the fact that the matrix $Y=U^\dagger R$, which generates the similarity matrix, has a block diagonal structure due to the independent subspace structure of $\mathcal{U}$; this fact is captured in the following theorem. 

\begin{theorem}
\label{theoremframes}
Let $\W, C, U,$ and $R$ be as in Theorem \ref{theoremsimilarity}.  If $Y=U^{\dagger}R$, then there exists a permutation matrix $P$ such that \[
YP =
\begin{bmatrix}
 Y_1 & 0 & \dotsb & 0\\
0 & Y_2 & \dotsb & 0\\
\vdots & \vdots & \ddots & \vdots\\
0 & \dotsb & 0 & Y_M\\
 \end{bmatrix},
 \]
where each $Y_i$ is a matrix of size $k_i\times n_i$, where $n_i$ is the number of columns in $\W$ from subspace $S_i$, and $k_i$ is the number of columns in $C$ from $S_i$.  Moreover, $\W P$ has the form $[\W_1\dots \W_M]$ where the columns of $\W_i$ come from the subspace $S_i$.
\end{theorem}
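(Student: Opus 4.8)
The plan is to exploit the identity $\W = CY$ supplied by Theorem \ref{CURdecomp} together with the fact that $Y = U^{\dagger}R$ is the \emph{minimum-norm} solution of a consistent linear system, and to show that the independent-subspace structure survives the row restriction that defines $U$ and $R$. First I would record the column-wise meaning of the decomposition. Writing $R = P_I\W$ and $U = P_I C$ for the row-selection matrix $P_I$ attached to the index set $I$, the $j$-th column of $Y$ is $y_j = U^{\dagger}(P_I w_j)$, and $\W = CY$ gives $w_j = C y_j$. Since $U^{\dagger}b$ is the minimum-norm least-squares solution of $Ux=b$, and since $P_I w_j$ lies in the column space of $U$, the vector $y_j$ is in fact the minimum-norm \emph{exact} solution of $Ux = P_I w_j$.

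Second, I would group the columns of $C$ by the subspace they are drawn from; as the ordering of the columns of $C$ is a free choice in the decomposition, I may assume $C = [C_1\ \dots\ C_M]$ with the $k_i$ columns of $C_i$ lying in $S_i$, so that $U = [U_1\ \dots\ U_M]$ with $U_i = P_I C_i$ having columns in $\tilde S_i := P_I S_i$. The key structural lemma is that the restricted subspaces $\tilde S_1,\dots,\tilde S_M$ are again independent in $\mathbb{K}^s$, with $\dim \tilde S_i = d_i$. This follows from a rank count: $\text{col}(U)\subseteq \sum_i \tilde S_i$ forces $r = \rank U \le \dim(\sum_i \tilde S_i) \le \sum_i \dim\tilde S_i \le \sum_i d_i = r$, so every inequality is an equality, yielding both independence of the $\tilde S_i$ and $\dim\tilde S_i = d_i$ (equivalently, $P_I$ is injective on each $S_i$).

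Third --- the crux --- I would show that for a column $w_j \in S_i$ the solution $y_j$ is supported only on the rows belonging to block $i$. Writing $x = (x_1,\dots,x_M)$ in the grouping above, any solution of $Ux = P_I w_j$ satisfies $\sum_{i'} U_{i'}x_{i'} = P_I w_j \in \tilde S_i$ with $U_{i'}x_{i'}\in\tilde S_{i'}$; independence of the $\tilde S_{i'}$ then forces $U_{i'}x_{i'}=0$ for $i'\neq i$ and $U_i x_i = P_I w_j$. Because $\|x\|^2 = \sum_{i'}\|x_{i'}\|^2$ and the constraints decouple across blocks, the minimum-norm solution must take $x_{i'}=0$ for every $i'\neq i$ (and $x_i = U_i^{\dagger}P_I w_j$); hence $y_j$ vanishes outside block $i$.

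Finally I would assemble the permutation. Let $P$ be the column permutation sending $\W$ to $\W P = [\W_1\ \dots\ \W_M]$ with the columns of $\W_i$ drawn from $S_i$; this is the second assertion and is immediate from the definition. Applying the same reordering to the columns of $Y$ and invoking the support statement of the previous step, every column of $YP$ coming from $S_i$ is supported on the $k_i$ rows indexed by $C_i$, so $YP$ is block diagonal with block $Y_i$ of size $k_i\times n_i$. The main obstacle is the third step: one must resist the temptation to argue that the columns of $C_i$ are linearly independent (they generally are not, since $k_i\ge d_i$), and instead use that the \emph{minimum-norm} property of $U^{\dagger}$, combined with the blockwise decoupling of the constraints, is precisely what annihilates the off-diagonal blocks. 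A secondary point worth stating carefully is that only the single column permutation $P$ of the statement is needed once the columns of $C$ are taken in subspace order; otherwise a conjugating row permutation must be introduced explicitly.
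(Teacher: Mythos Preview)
Your proposal is correct and follows essentially the same approach as the paper's proof: both reduce to the restricted subspaces $\tilde S_i = P_I S_i$, establish their independence, use that independence to decouple the equation $Ux = P_I w_j$ blockwise, and then invoke the minimum-norm property of $U^\dagger$ to kill the off-diagonal blocks. The only cosmetic differences are that the paper packages the independence of the $\tilde S_i$ into a separate lemma (your rank-count is exactly that lemma's proof), and that the paper phrases the last step as a comparison $\|y\|^2 \le \|\tilde y\|^2$ with the truncated solution $\tilde y = (y_1,0,\dots,0)$ rather than your direct block-separable minimization; your remark about needing the columns of $C$ in subspace order (or else an explicit row permutation) is a point the paper absorbs into its ``without loss of generality'' reduction.
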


The proofs of the above facts will be related in a subsequent section. 

The role of $d_{\max}$ in Theorem \ref {theoremsimilarity} is that $Q$ is almost a similarity matrix but each cluster may not be fully connected. By raising  $Q$ to the power $d_{\max}$ we ensure that each cluster is fully connected.

The next section will demonstrate that some well-known solutions to the subspace clustering problem are consequences of the CUR decomposition.  For the moment, let us state some potential advantages that arise naturally from the statement of Theorem \ref{theoremsimilarity}.  One of the advantages of the CUR decomposition is that one can construct many similarity matrices for a data matrix $\W$ by choosing different representative rows and columns; i.e. choosing different matrices $C$ or $R$ will yield different, but valid similarity matrices.  A possible advantage of this is that for large matrices, one can reduce the computational load by choosing a comparatively small number of rows and columns.  Often, in obtaining real data, many entries may be missing or extremely corrupted.  In motion tracking, for example, it could be that some of the features are obscured from view for several frames.  Consequently, some form of matrix completion may be necessary.  On the other hand, a look at the CUR decomposition reveals that whole rows of a data matrix can be missing as long as we can still choose enough rows such that the resulting matrix $R$ has the same rank as $\W$.  
In real situations when the data matrix is noisy, then there is no exact CUR decomposition for $\W$; however, if the rank of the clean data $\W$ is well-estimated, then one can compute several CUR {\em approximations} of $\W$, i.e. if the clean data should be rank $r$, then approximate $\W\approx CU^\dagger R$ where $C$ and $R$ contain at least $r$ columns and rows, respectively.  From each of these approximations, an approximate similarity may be computed as in Theorem \ref{theoremsimilarity}, and some sort of averaging procedure can be performed to produce a better approximate similarity matrix for the noisy data.  This idea is explored more thoroughly in Section \ref{SECExperiments}.

\section{Special Cases}\label{SECCases}
\subsection{Shape Interaction Matrix}
\label{SIM}

In their pioneering work on factorization methods for motion tracking \cite{Costeira98}, Costeira and Kanade introduced the {\em Shape Interaction Matrix}, or SIM.  Given a data matrix $\W$ whose skinny SVD is $U_r\Sigma_r V_r^*$, SIM($\W$) is defined to be $V_rV_r^*$.  Following their work, this has found wide utility in theory and in practice.  Their observation was that the SIM often provides a similarity matrix for data coming from independent subspaces.  It should be noted that in \cite{Akram16}, it was shown that examples of data matrices $\W$ can be found such that  $V_rV_r^*$ is not a similarity matrix for $\W$; however, it was noted there that SIM($\W$) is almost surely a similarity matrix (in a sense made precise therein).

Perhaps the most important consequence of Theorem \ref{theoremsimilarity} is that the shape interaction matrix is a special case of the general framework of the CUR decomposition.  This fact is shown in the following two corollaries of Theorem \ref{theoremsimilarity}, whose proofs may be found in Section \ref{SECCor}

\begin{corollary}
\label{corollarySIM}
 Let $\W=[w_1 \dotsb w_n] \in \mathbb{K}^{m \times n}$ be a rank $r$ matrix  whose columns are drawn from $\mathcal{U}$. Let $\W$ be factorized as $\W = \W \W^{\dagger} \W$, and let $Q$ be either the binary or absolute value version of $\W^{\dagger}\W$.  Then $\Xi_{\W} = Q^{d_{max}}$ is a similarity matrix for $\W$.  Moreover, if the skinny singular value decomposition of $\W$ is $\W=U_r\Sigma_r V_r^*$, then $\W^\dagger \W=V_rV_r^*$.  
\end{corollary}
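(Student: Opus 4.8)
The plan is to recognize the factorization $\W = \W\W^\dagger\W$ as a special case of the CUR decomposition and then invoke Theorem \ref{theoremsimilarity} directly. In the notation of Theorem \ref{CURdecomp}, I would take the index sets $I = \{1,\dots,m\}$ and $J = \{1,\dots,n\}$, so that $C = \W$, $R = \W$, and the intersection matrix is $U = \W$; since $\rank(\W) = r$ the rank hypothesis $\rank(U) = r$ holds automatically, and Theorem \ref{CURdecomp} yields the factorization $\W = \W\W^\dagger\W$. With these choices the intermediate matrix of Theorem \ref{theoremsimilarity} becomes $Y = U^\dagger R = \W^\dagger\W$.

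The one point requiring care is that Theorem \ref{theoremsimilarity} builds $Q$ from $Y^*Y$, whereas the corollary phrases $Q$ in terms of $\W^\dagger\W$ itself. To reconcile these I would observe that $\W^\dagger\W$ is an orthogonal projection: the Moore--Penrose conditions give $(\W^\dagger\W)^* = \W^\dagger\W$ (condition 4), so $Y$ is Hermitian, and $\W^\dagger\W\W^\dagger\W = \W^\dagger\W$ (using $\W\W^\dagger\W = \W$ from condition 1), so $Y$ is idempotent. Consequently $Y^*Y = Y\cdot Y = Y = \W^\dagger\W$, and the binary (respectively, absolute value) version of $Y^*Y$ coincides with $Q$ as defined in the corollary. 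The first conclusion---that $\Xi_{\W} = Q^{d_{max}}$ is a similarity matrix for $\W$---then follows immediately from Theorem \ref{theoremsimilarity}.

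For the final assertion I would compute $\W^\dagger\W$ directly from the skinny SVD $\W = U_r\Sigma_r V_r^*$. Since $\Sigma_r$ is an invertible $r\times r$ diagonal matrix we have $\W^\dagger = V_r\Sigma_r^{-1}U_r^*$; using the orthonormality relation $U_r^*U_r = I_r$ then gives
\[
\W^\dagger\W = V_r\Sigma_r^{-1}U_r^* U_r\Sigma_r V_r^* = V_r\Sigma_r^{-1}\Sigma_r V_r^* = V_rV_r^*.
\]
I do not expect a genuine obstacle in this corollary; the only subtle step is the identification of the trivial factorization $\W=\W\W^\dagger\W$ with the CUR framework, together with the observation that the projection property of $\W^\dagger\W$ collapses $Y^*Y$ back to $Y$, which is precisely what makes the hypotheses of Theorem \ref{theoremsimilarity} apply verbatim.
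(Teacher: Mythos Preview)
Your proposal is correct and follows essentially the same route as the paper: recognize $\W=\W\W^\dagger\W$ as the trivial CUR factorization with $C=R=U=\W$, invoke Theorem~\ref{theoremsimilarity}, and compute $\W^\dagger\W$ from the skinny SVD. In fact you are more careful than the paper, which simply writes ``follows directly from Theorem~\ref{theoremsimilarity}'' without addressing the discrepancy between $Y^*Y$ (as in the theorem) and $Y=\W^\dagger\W$ (as stated in the corollary); your observation that $\W^\dagger\W$ is a Hermitian idempotent, so that $Y^*Y=Y$, is exactly what closes this small gap.
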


\begin{corollary}\label{corollaryRR}
Let $\W=[w_1 \dotsb w_n] \in \mathbb{K}^{m \times n}$ be a rank $r$ matrix  whose columns are drawn from $\mathcal{U}$. Choose $C=\W$, and $R$ to be any rank $r$ row restriction of $\W$.  Then $\W = \W R^{\dagger} R$ by Theorem \ref{CURdecomp}. Moreover, $R^\dagger R=\W^\dagger \W=V_rV_r^*$, where $V_r$ is as in Corollary \ref{corollarySIM}.
\end{corollary}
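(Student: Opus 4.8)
The plan is to prove Corollary \ref{corollaryRR}, which asserts two things for the factorization $\W = \W R^\dagger R$ with $R$ a rank $r$ row restriction of $\W$: first, that this is indeed a valid CUR decomposition, and second, the key identity $R^\dagger R = \W^\dagger \W = V_r V_r^*$. The first claim is essentially immediate from Theorem \ref{CURdecomp}: taking $C = \W$ means $J = \{1,\dots,n\}$, so $U$ is exactly the row restriction matrix $R$ itself (the intersection of all columns with the selected rows $I$), and the hypothesis $\rank(R) = r$ matches the rank condition $\rank(U) = r$. Hence $\W = C U^\dagger R = \W R^\dagger R$ follows directly. The substantive content is therefore the identity $R^\dagger R = \W^\dagger \W$.

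The approach I would take to establish $R^\dagger R = \W^\dagger \W$ is to recognize that both are orthogonal projections onto the same subspace, namely the row space of $\W$. For any matrix $A$, the product $A^\dagger A$ is the orthogonal projection onto the row space of $A$ (equivalently, onto $\Ker(A)^\perp$); this is a standard consequence of the Moore--Penrose axioms listed in the preliminaries, since $A^\dagger A$ is self-adjoint by axiom (4), idempotent by axioms (1)--(2), and its range is $\Span(A^*)$. Thus $\W^\dagger \W$ projects onto the row space of $\W$, and $R^\dagger R$ projects onto the row space of $R$. The crux is therefore to show that $\W$ and $R$ have the same row space. Since $R$ consists of a selection of rows of $\W$, its row space is automatically contained in that of $\W$; and since $\rank(R) = r = \rank(\W)$, the two row spaces have equal dimension. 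Equality of a subspace with one of its subspaces of the same finite dimension forces them to coincide. Because orthogonal projections onto a given subspace are unique, we conclude $R^\dagger R = \W^\dagger \W$.

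For the final equality $\W^\dagger \W = V_r V_r^*$, I would invoke the skinny SVD $\W = U_r \Sigma_r V_r^*$ together with the SVD formula for the pseudoinverse recalled in the preliminaries, $\W^\dagger = V_r \Sigma_r^\dagger U_r^*$. A direct computation then gives $\W^\dagger \W = V_r \Sigma_r^\dagger U_r^* U_r \Sigma_r V_r^* = V_r \Sigma_r^\dagger \Sigma_r V_r^*$, and since $\Sigma_r$ is the invertible $r \times r$ diagonal matrix of nonzero singular values we have $\Sigma_r^\dagger \Sigma_r = I_r$, yielding $V_r V_r^*$. Alternatively, this is already supplied by Corollary \ref{corollarySIM}, so I could simply cite it rather than recompute. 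I do not expect any step to present a genuine obstacle; the only point requiring a little care is making explicit that $U = R$ in this specialization of Theorem \ref{CURdecomp} and that $R^\dagger R$ really is the orthogonal projection onto the row space, so that the row-space equality argument can be applied cleanly.
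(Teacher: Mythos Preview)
Your proposal is correct, and your route to $R^\dagger R = \W^\dagger \W$ is genuinely different from the paper's. You argue directly that $A^\dagger A$ is the orthogonal projection onto the row space of $A$ for any matrix $A$, and then observe that $R$ and $\W$ share the same row space since $R$ is a row restriction of $\W$ of equal rank. The paper instead factors $\W = BZ$ with $B$ a basis matrix for the column space (via Lemma \ref{lemmaUdagger}), writes $R = PBZ$, and uses the pseudoinverse product rule $(BZ)^\dagger = Z^\dagger B^\dagger$ (valid since $B$ has full column rank and $Z$ full row rank) together with $B^\dagger B = (PB)^\dagger(PB) = I$ to obtain $R^\dagger R = Z^\dagger Z = \W^\dagger \W$. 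Your argument is more elementary and more general: it makes no use of the union-of-subspaces hypothesis or of Lemmas \ref{lemmaUdagger} and \ref{LEMIndependent}, and would apply verbatim to any rank-$r$ matrix $\W$ and any rank-$r$ row submatrix $R$. The paper's approach, by contrast, ties the computation to the block structure coming from $\mathcal{U}$, which is thematically consistent with the surrounding material but is not actually needed for this particular identity. Both reach the final equality $\W^\dagger\W = V_rV_r^*$ by citing Corollary \ref{corollarySIM}.
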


It follows from the previous two corollaries that in the ideal (i.e. noise-free case), the shape interaction matrix of Costeira and Kanade is a special case of the more general CUR decomposition.  However, note that $Q^{d_{max}}$ need not be the SIM, $V_rV_r^*$, in the more general case when $C\neq\W$.

\subsection{Low-Rank Representation Algorithm}
Another class of methods for solving the subspace clustering problem arises from solving some sort of minimization problem.  It has been noted that in many cases such methods are intimately related to some matrix factorization methods \cite{Vidal10,Wei11}.  

One particular instance of a minimization based algorithm is the {\em Low Rank Representation} (LRR) algorithm of Liu, Lin, and Yu \cite{Liu10, Liu10_2}.  As a starting point, the authors consider the following rank minimization problem:
\begin{equation}
\label{eqn:lrr}
\min_{Z}\; \rank (Z) \quad \textrm{s.t.}  \quad \W = AZ,
\end{equation}
where $A$ is a dictionary that linearly spans $\W$. 

Note that there is indeed something to minimize over here since if $A=\W$, $Z=I_{n\times n}$ satisfies the constraint, and evidently $\rank(Z)=n$; however, if $\rank(\W)=r$, then $Z=\W^\dagger\W$ is a solution to $\W=\W Z$, and it can be easily shown that $\rank(\W^\dagger\W)=r$.  Note further that any $Z$ satisfying $\W=\W Z$ must have rank at least $r$, and so we have the following.
\begin{proposition}
Let $\W$ be a rank $r$ data matrix whose columns are drawn from $\mathcal{U}$, then $\W^\dagger \W$ is a solution to the minimization problem 
\[\underset{Z}\min \;\rank(Z) \quad \textrm{s.t.} \quad \W=\W Z.\]
\end{proposition}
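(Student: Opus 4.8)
The plan is to establish three facts and combine them: that $\W^\dagger\W$ is feasible for the constraint, that its rank equals $r$ exactly, and that no feasible $Z$ can have rank smaller than $r$. Because the objective is $\rank(Z)$ over the feasible set $\{Z : \W = \W Z\}$, it suffices to exhibit a feasible point whose rank equals the smallest value attainable on that set; these three facts do precisely that.

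First I would verify feasibility. By the first Moore--Penrose condition, $\W\W^\dagger\W = \W$, so $Z = \W^\dagger\W$ satisfies $\W = \W Z$. Next I would compute $\rank(\W^\dagger\W)$. Using the skinny SVD $\W = U_r\Sigma_r V_r^*$ one has $\W^\dagger = V_r\Sigma_r^{-1}U_r^*$, whence $\W^\dagger\W = V_r V_r^*$, exactly the identity already recorded in Corollary \ref{corollarySIM}. Since $V_r$ has $r$ orthonormal columns, $V_r V_r^*$ is the orthogonal projection onto the row space of $\W$ and therefore has rank $r$. (Equivalently, one could observe that $\W^\dagger\W$ is Hermitian and idempotent by Moore--Penrose conditions 1 and 4, hence an orthogonal projection whose rank equals its trace.)

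The remaining, and conceptually central, step is the lower bound: every feasible $Z$ must satisfy $\rank(Z)\geq r$. This follows from the submultiplicativity of rank applied to the constraint itself: since $\W = \W Z$, we have $r = \rank(\W) = \rank(\W Z) \leq \rank(Z)$. Thus the minimum of the objective over the feasible set is at least $r$, and since $\W^\dagger\W$ is feasible with rank exactly $r$, it attains this minimum and is therefore a solution.

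I do not anticipate any genuine obstacle here. The only points requiring a little care are the rank computation for $\W^\dagger\W$, which is cleanest via the SVD (or the projection argument above), and the observation that the constraint $\W=\W Z$ by itself forces $\rank(Z)\geq r$ through the product-rank inequality $\rank(\W Z)\leq\rank(Z)$.
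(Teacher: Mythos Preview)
Your proof is correct and follows essentially the same route as the paper: the paper's argument (given in the text immediately preceding the proposition rather than in a formal proof block) verifies feasibility via $\W\W^\dagger\W=\W$, notes $\rank(\W^\dagger\W)=r$, and observes that any feasible $Z$ has rank at least $r$. You supply a bit more detail on the last two points via the SVD and the product-rank inequality, but the structure is identical.
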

Note that in general, solving this minimization problem \eqref{eqn:lrr} is NP--hard (a special case of the results of \cite{RankMin08}; see also \cite{Recht08}).  Note that this problem is equivalent to minimizing $\|\sigma(Z)\|_0$ where $\sigma(Z)$ is the vector of singular values of $Z$, and $\|\cdot\|_0$ is the number of nonzero entries of a vector.  Additionally, the solution to Equation~\ref{eqn:lrr} is generally not unique, so typically the rank function is replaced with some norm to produce a convex optimization problem.  Based upon intuition from the compressed sensing literature, it is natural to consider replacing $\|\sigma(Z)\|_0$ by $\|\sigma(Z)\|_1$, which is the definition of the nuclear norm, denoted by $\|Z\|_*$ (also called the trace norm, Ky--Fan norm, or Shatten 1--norm).  In particular, in \cite{Liu10}, the following was considered:
\begin{equation}
\label{eqn:lrr:nuclear}
\min_{Z} \|Z\|_{*} \quad \textrm{s.t.}  \quad \W = AZ.
\end{equation}
Solving this minimization problem applied to subspace clustering is dubbed {\em Low-Rank Representation} by the authors in \cite{Liu10}.

Let us now specialize these problems to the case when the dictionary is chosen to be the whole data matrix, in which case we have
\begin{equation}
\label{eqn:lrr:nuclear:self}
\min_{Z} \|Z\|_{*} \quad \textrm{s.t.}  \quad \W = \W Z.
\end{equation}
It was shown in \cite{Liu10_2, Wei11} that the SIM defined in Section~\ref{SIM}, is the unique solution to problem \eqref{eqn:lrr:nuclear:self}:

\begin{theorem}[\cite{Wei11}, Theorem 3.1]\label{THMWeiLin}
Let $\W\in\mathbb{K}^{m\times n}$ be a rank $r$ matrix whose columns are drawn from $\mathcal{U}$, and let $\W=U_r\Sigma_r V_r^*$ be its skinny SVD.  Then $V_rV_r^*$ is the unique solution to \eqref{eqn:lrr:nuclear:self}.
\end{theorem}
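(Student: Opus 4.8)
The plan is to reduce the problem to a clean positive-semidefinite comparison and then extract both optimality and uniqueness from the operator monotonicity of the square-root function. First I would verify that $V_rV_r^*$ is feasible and compute its objective value: since $V_r^*V_r = I_r$, one has $\W(V_rV_r^*) = U_r\Sigma_r V_r^* = \W$, and because $V_rV_r^*$ is a rank-$r$ orthogonal projection its singular values are $r$ ones and the rest zeros, so $\|V_rV_r^*\|_* = r$. Next I would describe the feasible set. Writing $\W = U_r\Sigma_r V_r^*$ and using that $U_r\Sigma_r$ has full column rank, the constraint $\W Z = \W$ is equivalent to $V_r^*Z = V_r^*$. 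Hence every feasible $Z$ can be written as $Z = V_rV_r^* + H$, where $H$ ranges over all matrices with $V_r^* H = 0$ (equivalently $H^* V_r = 0$), and it remains only to show $\|V_rV_r^* + H\|_* \geq r$ with equality forcing $H = 0$.

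The key computation is to pass to $Z^*Z$. Using $V_r^* H = 0$ and $H^* V_r = 0$, the cross terms cancel and one obtains the identity
\[
Z^*Z = (V_rV_r^* + H)^*(V_rV_r^* + H) = V_rV_r^* + H^*H.
\]
Writing $P := V_rV_r^*$ (a rank-$r$ orthogonal projection) and $S := H^*H \succeq 0$, the objective becomes $\|Z\|_* = \tr\bigl((P + S)^{1/2}\bigr)$, while the target value is $\tr\bigl(P^{1/2}\bigr) = \tr(P) = r$.

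Finally I would invoke operator monotonicity of the square root (the L\"owner--Heinz inequality). Since $S \succeq 0$ gives $P + S \succeq P \succeq 0$, we get $(P+S)^{1/2} \succeq P^{1/2}$, and taking traces yields $\|Z\|_* \geq r$; this establishes optimality. For uniqueness, suppose equality holds. Then $(P+S)^{1/2} - P^{1/2}$ is positive semidefinite with zero trace, hence is the zero matrix, so $(P+S)^{1/2} = P^{1/2}$; squaring gives $P + S = P$, i.e. $H^*H = 0$ and therefore $H = 0$. Thus $Z = V_rV_r^*$ is the unique minimizer. I expect the main obstacle to be the uniqueness claim rather than the lower bound: the lower bound follows from any number of routine facts, but pinning down that no other feasible matrix attains the minimum is where the argument must do real work, and it is precisely the vanishing of the cross terms in $Z^*Z$ (a consequence of the constraint-induced orthogonality $V_r^*H = 0$) together with the strict form of operator monotonicity that delivers it.
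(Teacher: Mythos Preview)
Your argument is correct. The parametrization of the feasible set as $Z = V_rV_r^* + H$ with $V_r^*H = 0$ is equivalent to the paper's, but from there the routes diverge. The paper completes the SVD to $V = [\,V_r\ \tilde V_r\,]$, writes $Z = V_rV_r^* + \tilde V_r(\tilde V_r^*+X)$, left-multiplies by the unitary $V^*$, and observes that $V^*Z$ splits into two blocks stacked vertically, one equal to $V_r^*$ and the other arbitrary; it then invokes a cited lemma (essentially \cite[Lemma~1]{Recht11} / \cite[Lemma~7.2]{Liu10_2}) asserting that the nuclear norm of such a stacked matrix strictly decreases upon deleting the lower block unless that block is zero. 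Your approach sidesteps both the completion of $V$ and the external block lemma: the cancellation of cross terms in $Z^*Z$ reduces everything to $\tr\bigl((P+S)^{1/2}\bigr) \geq \tr(P^{1/2})$, which follows at once from the operator monotonicity of the square root (L\"owner--Heinz), and the equality analysis is immediate from the trace of a positive semidefinite matrix vanishing. In effect you have given a self-contained proof of the very block lemma the paper cites, specialized to this situation; the paper's version is shorter if one is willing to quote that lemma, while yours is more elementary and makes the mechanism of uniqueness completely transparent.
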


For clarity and completeness of exposition, we supply a simpler proof of Theorem \ref{THMWeiLin} here than appears in \cite{Wei11}. 

\begin{proof}
Suppose that $\W=U\Sigma V^*$ is the full SVD of $\W$.  Then since $\W$ has rank $r$, we can write
\[\W = U\Sigma V^* = \begin{bmatrix} U_r & \tilde U_r\end{bmatrix} \begin{bmatrix} \Sigma_r & 0\\ 0 & 0\\\end{bmatrix}\begin{bmatrix} V_r^* \\ \tilde V_r^*\end{bmatrix},\]
where $U_r\Sigma_rV_r^*$ is the skinny SVD of $\W$.  
Then if $\W=\W Z$, we have that $I-Z$ is in the null space of $\W$.  Consequently, $I-Z=\tilde V_r X$ for some $X$.  Thus we find that 
\begin{align*}
Z & = I+\tilde V_r X\\
& = VV^*+\tilde V_r X\\
& = V_rV_r^*+\tilde V_r\tilde V_r^*+\tilde V_r X\\
& = V_r V_r^*+\tilde V_r(\tilde V_r^*+X)\\
& =:A+B.
\end{align*}

Recall that the nuclear norm is unchanged by multiplication on the left or right by unitary matrices, whereby we see that $\|Z\|_*=\|V^*Z\|_*=\|V^*A+V^*B\|_*$.  However, 
\[V^*A +V^* B= \begin{bmatrix} V_r^*\\ 0 \end{bmatrix}+\begin{bmatrix}0 \\ \tilde V_r^*+X\end{bmatrix}.\]
Due to the above structure, we have that $\|V^*A+V^*B\|_*\geq\|V^*A\|_*,$ with equality if and only if $V^*B=0$ (for example, via the same proof as \cite[Lemma 1]{Recht11}, or also Lemma 7.2 of \cite{Liu10_2}). 

It follows that \[\|Z\|_*> \|A\|_*= \|V_rV_r^*\|_*,\]
for any $B\neq0$.  Hence $Z=V_rV_r^*$ is the unique minimizer of problem \eqref{eqn:lrr:nuclear:self}.
\end{proof}

\begin{corollary}
Let $\W$ be as in Theorem \ref{THMWeiLin}, and let $\W=\W R^\dagger R$ be a factorization of $\W$ as in Theorem \ref{CURdecomp}.  Then $R^\dagger R=\W^\dagger \W=V_rV_r^*$ is the unique solution to the minimization problem \eqref{eqn:lrr:nuclear:self}.
\end{corollary}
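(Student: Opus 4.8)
The plan is to leverage the two results immediately preceding this corollary: the algebraic identity established in Corollary \ref{corollaryRR} together with the uniqueness statement of Theorem \ref{THMWeiLin}. The entire content of the claim reduces to stitching these together, so no new machinery is required; the corollary is a direct synthesis of facts already in hand.

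First I would recall that by Corollary \ref{corollaryRR}, for the choice $C=\W$ and any rank $r$ row restriction $R$ of $\W$, one has the chain of equalities $R^\dagger R = \W^\dagger \W = V_rV_r^*$. In particular, the constraint of the optimization problem is satisfied: Theorem \ref{CURdecomp} guarantees $\W = \W R^\dagger R$, so $R^\dagger R$ is a feasible point for the minimization problem \eqref{eqn:lrr:nuclear:self}. Next I would invoke Theorem \ref{THMWeiLin}, which asserts that $V_rV_r^*$ is the \emph{unique} minimizer of $\|Z\|_*$ subject to $\W = \W Z$. Combining this with the identity $R^\dagger R = V_rV_r^*$ from the previous step, we conclude that $R^\dagger R$ coincides with the unique solution to \eqref{eqn:lrr:nuclear:self}, which is exactly the assertion of the corollary.

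Since both ingredients are already available, there is no genuine obstacle here. The only point demanding any care is confirming that $R^\dagger R$ is genuinely feasible, i.e. that $\W = \W R^\dagger R$, which is precisely the content of the CUR factorization in Theorem \ref{CURdecomp} applied with $C=\W$ and with $R$ chosen as a rank $r$ row restriction so that $\rank(R)=r=\rank(\W)$. With feasibility verified and the identity $R^\dagger R = V_rV_r^*$ substituted into the uniqueness conclusion of Theorem \ref{THMWeiLin}, the proof is complete.
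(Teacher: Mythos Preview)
Your proposal is correct and follows exactly the paper's approach: the paper's proof is the single line ``Combine Corollary \ref{corollaryRR} and Theorem \ref{THMWeiLin},'' and you have simply unpacked that combination, verifying feasibility from Theorem \ref{CURdecomp}, invoking the identity $R^\dagger R = V_rV_r^*$ from Corollary \ref{corollaryRR}, and then applying the uniqueness in Theorem \ref{THMWeiLin}.
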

 \begin{proof}
 Combine Corollary \ref{corollaryRR} and Theorem \ref{THMWeiLin}.
 \end{proof}

Let us note that the unique minimizer of problem \eqref{eqn:lrr:nuclear} is known for general dictionaries as the following result of Liu, Lin, Yan, Sun, Yu, and Ma demonstrates.

\begin{theorem}[\cite{Liu10_2}, Theorem 4.1]
Suppose that $A$ is a dictionary that linearly spans $\W$.  Then the unique minimizer of problem \eqref{eqn:lrr:nuclear} is \[Z = A^\dagger \W.\]
\end{theorem}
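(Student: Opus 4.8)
The plan is to mirror the argument that proves Theorem~\ref{THMWeiLin}, but carried out with respect to the singular subspaces of the dictionary $A$ instead of those of $\W$. Write a full SVD $A=U_A\Sigma_A V_A^*$ and split $V_A=\begin{bmatrix} V_\rho & \tilde V_\rho\end{bmatrix}$, where $\rho=\rank(A)$, so that the columns of $V_\rho$ span the row space of $A$ while the columns of $\tilde V_\rho$ span $\ker A$. Recall that $A^\dagger A=V_\rho V_\rho^*$ is the orthogonal projection onto $(\ker A)^\perp$ and $I-A^\dagger A=\tilde V_\rho\tilde V_\rho^*$ is the projection onto $\ker A$.

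First I would describe the feasible set. Since $A$ linearly spans $\W$, every column of $\W$ lies in the column space of $A$, so $AA^\dagger\W=\W$ and $Z_0:=A^\dagger\W$ is feasible. Moreover $\W=AZ$ holds if and only if $A(Z-Z_0)=0$, that is $Z=A^\dagger\W+N$ with $AN=0$, equivalently $N=\tilde V_\rho X$ for some $X$. The key structural observation is an orthogonality of column spaces: the columns of $A^\dagger\W$ lie in $\image(A^\dagger)=\image(V_\rho)$, whereas the columns of $N$ lie in $\ker A=\image(\tilde V_\rho)$, so the particular solution and the homogeneous part occupy complementary singular subspaces of $A$.

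Next I would exploit this by rotating with the unitary $V_A^*$, under which the nuclear norm is invariant. A direct computation, using $\tilde V_\rho^* A^\dagger\W=0$ and $V_\rho^*\tilde V_\rho=0$, gives $V_A^*Z=\bigl[\begin{smallmatrix} V_\rho^*A^\dagger\W\\ X\end{smallmatrix}\bigr]$. Writing $G:=V_\rho^*A^\dagger\W$, the same block-column nuclear-norm inequality invoked in the proof of Theorem~\ref{THMWeiLin} (Lemma~1 of \cite{Recht11}, or Lemma~7.2 of \cite{Liu10_2}) shows that $\|Z\|_*=\bigl\|\bigl[\begin{smallmatrix} G\\ X\end{smallmatrix}\bigr]\bigr\|_*\ge\|G\|_*$, with equality if and only if $X=0$. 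Finally, because $A^\dagger\W=V_\rho G$ and $V_\rho$ has orthonormal columns, left multiplication by $V_\rho$ preserves singular values, so $\|G\|_*=\|A^\dagger\W\|_*$. Combining these, $\|Z\|_*\ge\|A^\dagger\W\|_*$ with equality precisely when $X=0$, i.e.\ when $Z=A^\dagger\W$; this gives both optimality and uniqueness.

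I expect the main obstacle to be essentially bookkeeping rather than analysis: one must verify carefully that $A^\dagger\W$ and the homogeneous correction $N$ live in orthogonal singular subspaces of $A$, so that conjugating by $V_A^*$ genuinely yields the clean block-column form above. Once that reduction is secured, the decisive inequality is quoted verbatim from the earlier proof and no new analytic ingredient is required; the whole argument is simply the proof of Theorem~\ref{THMWeiLin} relativized from $\W$ to the dictionary $A$.
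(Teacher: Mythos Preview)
Your argument is correct. The paper does not supply its own proof of this statement; it is quoted as Theorem~4.1 of \cite{Liu10_2} and used as a black box. What the paper \emph{does} prove is the special case $A=\W$ (Theorem~\ref{THMWeiLin}), and your proposal is precisely the natural relativization of that argument to an arbitrary spanning dictionary $A$: replace the SVD of $\W$ by the SVD of $A$, note that the particular solution $A^\dagger\W$ has columns in $\image(V_\rho)$ while any homogeneous correction lies in $\image(\tilde V_\rho)$, rotate by $V_A^*$ to obtain the block-stacked form, and invoke the same nuclear-norm inequality. All the verifications you flag ($\tilde V_\rho^*A^\dagger\W=0$, $A^\dagger\W=V_\rho G$, preservation of singular values under the isometry $V_\rho$) go through without issue. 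So your proof is both correct and in the same spirit as the paper's treatment of the $A=\W$ case; it simply fills in what the paper outsourced to \cite{Liu10_2}.
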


The following corollary is thus immediate from the CUR decomposition.

\begin{corollary}
If $\W$ has CUR decomposition $\W = CC^\dagger\W$ (where $R=\W$, hence $U=C$, in Theorem \ref{CURdecomp}), then $C^\dagger\W$ is the unique solution to
\[\min_{Z}\|Z\|_*\quad \textrm{s.t.} \quad \W=CZ.\]
\end{corollary}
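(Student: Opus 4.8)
The plan is to recognize this corollary as a direct specialization of the immediately preceding theorem \cite{Liu10_2} (its Theorem 4.1), taking the dictionary $A$ there to be the matrix $C$. That theorem already identifies $A^\dagger\W$ as the unique nuclear-norm minimizer of problem \eqref{eqn:lrr:nuclear} for \emph{any} dictionary $A$ that linearly spans $\W$, so the entire content of the argument reduces to two bookkeeping checks: that the stated factorization $\W=CC^\dagger\W$ is a legitimate instance of the CUR decomposition, and that $C$ qualifies as a spanning dictionary in the sense required by the cited result.

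First I would confirm that $\W=CC^\dagger\W$ arises from Theorem \ref{CURdecomp}. Choosing the row index set to be all of $\{1,\dots,m\}$ forces $R=\W$ and makes the middle block $U$ equal to the full set of selected columns, i.e.\ $U=C$, since no rows are discarded. The rank hypothesis $\rank(U)=r$ then reads $\rank(C)=r=\rank(\W)$, and Theorem \ref{CURdecomp} yields $\W=CU^\dagger R=CC^\dagger\W$, exactly the factorization named in the statement.

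Next I would verify that $C$ linearly spans $\W$. Its columns are columns of $\W$, hence they lie in the column space of $\W$; and since $\rank(C)=r=\rank(\W)$, they in fact span that column space. Consequently every column of $\W$ is a linear combination of columns of $C$, the constraint $\W=CZ$ is feasible, and $C$ is a valid dictionary for the minimization problem.

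With these observations in hand, applying \cite{Liu10_2} (Theorem 4.1) with $A=C$ shows that the unique minimizer of $\min_{Z}\|Z\|_*$ subject to $\W=CZ$ is $Z=C^\dagger\W$, which completes the proof. I expect no genuine obstacle here; the only step demanding any care is the identification $U=C$ that collapses the general CUR setup onto the self-dictionary case $R=\W$, after which the rank condition of Theorem \ref{CURdecomp} supplies precisely the spanning hypothesis needed to invoke the cited theorem.
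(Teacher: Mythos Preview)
Your proposal is correct and matches the paper's approach: the paper states this corollary as ``immediate from the CUR decomposition'' right after quoting \cite[Theorem~4.1]{Liu10_2}, and your argument is precisely the natural unpacking of that remark---check that $R=\W$ forces $U=C$, that $\rank(C)=r$ so $C$ spans the column space of $\W$, and then invoke the cited theorem with $A=C$.
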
 

The above theorems and corollaries provide a theoretical bridge between the shape interaction matrix, CUR decomposition, and Low-Rank Representation.  In particular, in \cite{Wei11}, the authors observe that of primary interest is that while LRR stems from sparsity considerations \`{a} la compressed sensing, its solution in the noise free case in fact comes from matrix factorization, which is quite interesting.

\subsection{Basis Framework of \cite{Akram16}}

As a final note, the CUR framework proposed here gives a broader vantage point for obtaining similarity matrices than that of \cite{Akram16}.  Consider the following, which is the main result therein:

\begin{theorem}[\cite{Akram16}, Theorem 2]\label{THMAkram16}
Let $\W\in\mathbb{K}^{m\times n}$ be a rank $r$ matrix whose columns are drawn from $\mathcal{U}$.  Suppose $\W=BP$ where the columns of $B$ form a basis for the column space of $\W$ and the columns of $B$ lie in $\mathcal{U}$ (but are not necessarily columns of $\W$).  If $Q=\text{abs}(P^*P)$ or $Q=\text{bin}(P^*P)$, then $\Xi_\W=Q^{d_{\max}}$ is a similarity matrix for $\W$.
\end{theorem}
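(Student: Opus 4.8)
\noindent The plan is to mirror the block-diagonal strategy behind Theorems \ref{theoremframes} and \ref{theoremsimilarity}, with $B$ now playing the role of $C$ and $P$ the role of $Y=U^\dagger R$. First I would pin down the structure of $B$. Since the data drawn from each $S_i$ is generic it spans $S_i$, and since the $S_i$ are independent the column space of $\W$ is the internal direct sum $S_1\oplus\dots\oplus S_M$, of dimension $r=d_1+\dots+d_M$. A nonzero vector of $\mathcal{U}$ lies in exactly one $S_i$, so each of the $r$ columns of $B$ belongs to a single subspace; grouping them gives, after a column permutation, $B=[B_1\ \dots\ B_M]$ with $\text{span}(B_i)\subseteq S_i$. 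Because these spans must add up directly to all of $S_1\oplus\dots\oplus S_M$, a dimension count forces each $B_i$ to be a basis of $S_i$, so $B_i$ has exactly $d_i$ columns and is injective as a map $\mathbb{K}^{d_i}\to S_i$.

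Next I would establish the block structure of $P$, the analogue of Theorem \ref{theoremframes}. Fix a column $w_j$ of $\W$, lying in some $S_{\sigma(j)}$, and write $w_j=Bp_j=\sum_i B_i(p_j)_i$, where $(p_j)_i$ is the sub-block of $p_j$ indexed by $B_i$. Each summand $B_i(p_j)_i$ lies in $S_i$, so this is the decomposition of $w_j$ in $\bigoplus_i S_i$; uniqueness of that decomposition together with $w_j\in S_{\sigma(j)}$ forces $B_i(p_j)_i=0$ for $i\neq\sigma(j)$, and injectivity of $B_i$ gives $(p_j)_i=0$. Thus the coordinates of $w_j$ are supported only in block $\sigma(j)$, and permuting the columns of $\W$ (equivalently of $P$) to group columns by subspace puts $P$ into block-diagonal form with blocks $P_i$ of size $d_i\times n_i$. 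Consequently $P^*P$, and hence $Q$ (its entrywise absolute-value or binary version), is, up to the same symmetric permutation, block diagonal: if $w_a$ and $w_b$ come from different subspaces their coordinate vectors have disjoint support, so $(P^*P)(a,b)=0$ and $Q(a,b)=0$. Since $Q$ is symmetric, so is $\Xi_\W=Q^{d_{\max}}$ (establishing condition (i) of Definition \ref{DefSIM}), and its cross-subspace entries vanish, giving the half of condition (ii) that columns from different subspaces yield a zero entry.

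The remaining, and main, task is to show the diagonal blocks have \emph{no} zero entries, i.e. that $(Q_i)^{d_{\max}}$ is entrywise positive; this is exactly where the exponent $d_{\max}$ and genericity do their work. First I would translate genericity into a statement about $P_i$: since $\W_i=B_iP_i$ with $B_i$ injective, any $d_i$ columns of $\W_i$ form a basis of $S_i$ if and only if the corresponding $d_i$ columns of $P_i$ are linearly independent, so genericity says every set of at most $d_i$ columns of $P_i$ is independent (the spark of $P_i$ exceeds $d_i$), and $n_i\ge d_i+1$. I would regard $Q_i$ as the adjacency matrix of the graph $G_i$ on the columns from $S_i$, with an edge $a\sim b$ precisely when $(P_i)_a\not\perp(P_i)_b$; its diagonal is strictly positive (each $w_a\neq0$ forces $(P_i)_a\neq0$), so $G_i$ has a self-loop at every vertex. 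As the nonnegative matrix $Q_i$ admits no cancellation under powers and self-loops pad short walks to length $d_{\max}$, it suffices to prove $G_i$ has diameter at most $d_i\ (\le d_{\max})$.

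I would prove the diameter bound by a dimension-growth argument. Fix a vertex $j$, let $N_t$ be the set of vertices reachable from $j$ by a walk of length at most $t$, and let $V_t=\text{span}\{(P_i)_v:v\in N_t\}$, so $\dim V_0=1$. A vertex $u\notin N_{t+1}$ is adjacent to nothing in $N_t$, hence $(P_i)_u\perp V_t$. The key claim is that while vertices remain outside $N_{t+1}$ the dimension strictly increases, $\dim V_{t+1}\ge\dim V_t+1$: if instead $V_{t+1}=V_t=:V$ with $\dim V=d'$, then the $s:=|N_{t+1}|$ columns indexed by $N_{t+1}$ all lie in $V$, and the spark bound forces $s=d'\le d_i-1$ (were $s\ge d_i$ or $d'=d_i$, then $V=\mathbb{K}^{d_i}$ would admit no nonzero orthogonal column, contradicting that outside vertices remain); meanwhile the $n_i-s\ge d_i-s+1$ remaining columns all lie in $V^\perp$, of dimension $d_i-s$, so $d_i-s+1\le d_i$ of them are linearly dependent, contradicting the spark bound. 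Hence $\dim V_t$ climbs by at least one per step until it reaches $d_i$; once $V_\tau=\mathbb{K}^{d_i}$ no nonzero column is orthogonal to it, so $N_{\tau+1}$ is the whole vertex set with $\tau+1\le d_i$. This yields diameter at most $d_i$, hence positivity of the diagonal blocks, and with the vanishing off-diagonal blocks shows $\Xi_\W$ is a similarity matrix. I expect the dimension-growth step to be the crux; the rest is the bookkeeping that also underlies Theorems \ref{theoremframes} and \ref{theoremsimilarity}.
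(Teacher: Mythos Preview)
Your argument is correct, but it takes a different route from the paper. The paper does not reprove Theorem~\ref{THMAkram16} from scratch; instead it observes that the basis factorization $\W=BP$ can be absorbed into the CUR framework by augmenting the data to $\tilde\W=[\W\ B]$ and taking $C=B$, so that $\tilde\W=BB^\dagger\tilde\W$ is a genuine CUR decomposition to which Theorem~\ref{theoremsimilarity} applies, and clustering $\tilde\W$ in particular clusters $\W$. Your approach, by contrast, re-runs the machinery of Theorems~\ref{theoremframes}--\ref{theoremsimilarity} directly on $B$ and $P$: you establish the block structure of $B$ and $P$ from independence, transfer genericity from $\W_i$ to $P_i$ via the injectivity of $B_i$, and then bound the diameter of the associated graph. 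Your diameter argument (monotone growth of $\dim V_t$ until it exhausts $\mathbb{K}^{d_i}$) is also different from the paper's Lemma~\ref{lemmageneric2}, which handles the base case $N=r+1$ by a counting/orthogonality contradiction and then reduces the general case to that base case; both yield the bound $\mathrm{diam}\le d_i$.

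What each approach buys: the paper's augmentation trick is short and makes the conceptual point that the basis framework of \cite{Akram16} is subsumed by the CUR framework, but strictly speaking it produces the similarity matrix for $\tilde\W$ built from $Y=[P\ I]$ rather than the specific matrix $(\mathrm{abs}(P^*P))^{d_{\max}}$ claimed in Theorem~\ref{THMAkram16}; extracting the exact statement from that reduction requires a little more care. Your direct argument is longer but self-contained and proves the precise assertion about $Q^{d_{\max}}$ without passing through an enlarged data matrix.
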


At first glance, Theorem \ref{THMAkram16} is on the one hand more specific than Theorem \ref{theoremsimilarity} since the columns of $B$ must be a basis for the span of $\W$, whereas $C$ may have some redundancy (i.e. the columns form a frame).  On the other hand, Theorem \ref{THMAkram16} seems more general in that the columns of $B$ need only come from $\mathcal{U}$, but are not forced to be columns of $\W$ as are the columns of $C$.  However, one need only notice that if $\W=BP$ as in the theorem above, then defining $\tilde\W=[\W \;\; B]$ gives rise to the CUR decomposition $\tilde\W = BB^\dagger \tilde\W$.  But clustering the columns of $\tilde\W$ via Theorem \ref{theoremsimilarity} automatically gives the clusters of $\W$.

\section{Proofs}\label{SECProofs}

Here we enumerate the proofs of the results in Section \ref{SECCURSeg}, beginning with some lemmata. 

\subsection{Some Useful Lemmata}

The first lemma follows immediately from the definition of independent subspaces.

\begin{lemma}
\label{lemmaUdagger}
Suppose $U = [U_1\dots U_M]$ where each $U_i$ is a submatrix whose columns come from independent subspaces of $\mathbb{K}^m$.  Then we may write 
\[U = [B_1 \; \dots \; B_M]\begin{bmatrix} V_1 & 0 & \dots & 0\\ 0 & V_2 & \dots & 0\\ \vdots & \vdots & \ddots & \vdots\\ 0 & \dots & 0 & V_M\\ \end{bmatrix}.\]
where the columns of  $B_i$ form a basis for the column space of $U_i$. 
\end{lemma}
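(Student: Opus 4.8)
The plan is to build the factorization block by block and then assemble the pieces, using nothing more than the definition of a column space. First I would fix an index $i$ and let $B_i$ be a matrix whose columns form a basis for the column space of $U_i$; such a basis exists because the column space is a finite-dimensional subspace of $\mathbb{K}^m$. Since every column of $U_i$ lies in the column space of $U_i$, which is by construction the span of the columns of $B_i$, each column of $U_i$ is a linear combination of the columns of $B_i$. Collecting the corresponding coordinate vectors as the columns of a matrix $V_i$ yields $U_i = B_i V_i$, where $V_i$ has exactly $\dim(\text{column space of } U_i)$ rows (matching the number of columns of $B_i$) and as many columns as $U_i$ has.

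The second step is to recognize that these per-block factorizations assemble into the claimed block-diagonal form. Writing $B = [B_1 \; \dots \; B_M]$ and letting $V$ denote the block-diagonal matrix with diagonal blocks $V_1, \dots, V_M$, I would verify the identity $BV = U$ by computing the product one block of columns at a time. The block of columns of $V$ corresponding to $U_j$ has $V_j$ in its $j$-th block-row and zeros elsewhere, so multiplying $B$ against it annihilates every $B_i$ with $i \neq j$ and leaves precisely $B_j V_j = U_j$. Since this holds for each $j$, we conclude $BV = [U_1 \; \dots \; U_M] = U$, which is exactly the asserted decomposition.

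I expect no genuine obstacle here, as the statement is a routine exercise in block matrix multiplication; the only point requiring a moment's care is the dimension bookkeeping that makes $BV$ well-defined, namely that the number of columns of each $B_i$ equals the number of rows of $V_i$, and this is automatic from the construction. I would also remark that the independence hypothesis on the subspaces $\{S_i\}$ is in fact not used in establishing the factorization itself: the argument above works for any partition of $U$ into column blocks. Independence enters only later in the development, where it guarantees that the assembled matrix $B$ has linearly independent columns (so that $[B_1 \; \dots \; B_M]$ is a basis for $S_1 + \dots + S_M$ restricted to the relevant column spaces); I would nonetheless keep the hypothesis in the statement for consistency with the standing assumptions of Section \ref{SECAssumptions}.
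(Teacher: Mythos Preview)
Your proposal is correct and matches the paper's approach: the paper simply states that the lemma ``follows immediately from the definition of independent subspaces,'' and your block-by-block construction is precisely the routine unpacking of that remark. Your observation that the independence hypothesis is not actually needed for the factorization itself (only for the linear independence of the columns of $[B_1\;\dots\;B_M]$) is sharper than what the paper says, and is correct.
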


The next lemma is a special case of \cite[Lemma 1]{Akram16}.

\begin{lemma}\label{LEMGeneric}
Suppose that $U\in\mathbb{K}^{m\times n}$ has columns which are generic for the subspace $S$ of $\mathbb{K}^m$ from which they are drawn.  Suppose $P\in\mathbb{K}^{r\times m}$ is a row selection matrix such that $\rank(PU)=\rank(U)$.  Then the columns of $PU$ are generic.
\end{lemma}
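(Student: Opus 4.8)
The plan is to argue that the row-selection $P$ acts as a linear isomorphism when restricted to $S$, and that genericity is a property preserved under any isomorphism. This reduces the whole statement to the elementary fact that an isomorphism carries bases to bases.

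First I would record what genericity gives for $U$ itself. By definition the columns of $U$ lie in $S$, there are more than $d:=\dim S$ of them, and every $d$ of them form a basis for $S$. In particular the columns span $S$, so that $\rank(U)=\dim S=d$ and the column space of $U$ is exactly $S$. Next I would identify the column space of $PU$: since the columns of $U$ span $S$, applying $P$ columnwise shows that the columns of $PU$ span $PS:=\{Ps:s\in S\}$, a subspace of $\mathbb{K}^r$, and hence $\rank(PU)=\dim(PS)$. The hypothesis $\rank(PU)=\rank(U)$ then reads $\dim(PS)=d=\dim S$.

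The key step is to extract injectivity from this equality of dimensions. The restriction $P|_S:S\to PS$ is surjective by the very definition of $PS$, and it is a linear map between spaces of the same finite dimension $d$; therefore it is a linear isomorphism, i.e. $P$ is injective on $S$. After this observation the remainder is routine.

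Finally I would verify the two defining properties of genericity for the columns of $PU$. Property (i) is immediate, since $PU$ has the same number of columns $n>d$ as $U$. For property (ii), I would fix any $d$ columns of $PU$; they are the images $Pu_{j_1},\dots,Pu_{j_d}$ of $d$ columns $u_{j_1},\dots,u_{j_d}$ of $U$, which by genericity of $U$ form a basis for $S$. Because $P|_S$ is an isomorphism and isomorphisms send bases to bases, $Pu_{j_1},\dots,Pu_{j_d}$ form a basis for $PS$. Thus every $d$ columns of $PU$ form a basis for $PS$, so the columns of $PU$ are generic. The only place demanding any care is the passage from equality of ranks to injectivity of $P$ on $S$; I expect no genuine obstacle beyond keeping the dimension bookkeeping straight.
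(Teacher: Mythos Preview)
Your argument is correct: the rank hypothesis forces $P|_S$ to be a linear isomorphism onto $PS$, and since genericity is manifestly invariant under linear isomorphism, the conclusion follows. The paper does not actually supply a proof of this lemma but simply cites it as a special case of \cite[Lemma~1]{Akram16}; your write-up is precisely the natural self-contained argument one would expect to find there.
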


\begin{lemma}\label{LEMIndependent}
Suppose that $U = [U_1 \;\dots\; U_M]$, and  that each $U_i$ is a submatrix whose columns come from independent subspaces $S_i$, $i=1,\dots,M$ of $\mathbb{K}^m$, and that the columns of $U_i$ are generic for $S_i$.  Suppose that $P\in\mathbb{K}^{r\times m}$ with $r\geq\rank(U)$ is a row selection matrix such that $\rank(PU)=\rank(U)$. Then the subspaces $P(S_i)$, $i=1,\dots,M$ are independent.
\end{lemma}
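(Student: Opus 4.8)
The plan is to prove the statement by a dimension-counting (pinching) argument that exploits the hypothesis $\rank(PU)=\rank(U)$ together with the subadditivity of dimension under sums of subspaces. First I would record the consequences of the hypotheses. Since the columns of each $U_i$ are generic for $S_i$, they in particular span $S_i$; hence the column space of $U_i$ is $S_i$ and $\rank(U_i)=d_i$, writing $d_i:=\dim S_i$. The column space of $U=[U_1\;\dots\;U_M]$ is then $S_1+\dots+S_M$, and by the independence of $\{S_i\}$ (Definition \ref{indp:subspaces}) this space has dimension $\sum_{i=1}^M d_i$; thus $\rank(U)=\sum_{i=1}^M d_i$.

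Next I would transport everything through the linear map $P$. Because $P$ is linear, the column space of $PU_i$ is exactly $P(S_i)$, so $\rank(PU_i)=\dim P(S_i)\le\dim S_i=d_i$, the last inequality holding because a linear image cannot raise dimension. Likewise the column space of $PU$ is $P(S_1+\dots+S_M)=P(S_1)+\dots+P(S_M)$, so $\rank(PU)=\dim\big(P(S_1)+\dots+P(S_M)\big)$.

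Combining these facts with the hypothesis $\rank(PU)=\rank(U)$ and the general subadditivity $\dim(\sum_i P(S_i))\le\sum_i\dim P(S_i)$ yields the chain
\[
\sum_{i=1}^M d_i=\rank(U)=\rank(PU)=\dim\Big(\sum_{i=1}^M P(S_i)\Big)\le\sum_{i=1}^M\dim P(S_i)\le\sum_{i=1}^M d_i .
\]
Every inequality is therefore an equality. The outer equality $\sum_i\dim P(S_i)=\sum_i d_i$ together with the termwise bounds $\dim P(S_i)\le d_i$ forces $\dim P(S_i)=d_i$ for each $i$, so each $P(S_i)$ is non-trivial; and the middle equality $\dim(\sum_i P(S_i))=\sum_i\dim P(S_i)$ is precisely the defining relation for the subspaces $P(S_1),\dots,P(S_M)$ to be independent. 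Finally $\sum_i\dim P(S_i)=\rank(PU)\le r$, so the ambient-dimension bound in Definition \ref{indp:subspaces} is also satisfied.

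The argument is short, and the only step that requires genuine care — the would-be main obstacle — is justifying that the column space of $U$ is the \emph{full} sum $S_1+\dots+S_M$ (so that $\rank(U)=\sum_i d_i$) rather than a proper subspace of it. This is exactly where genericity enters: it guarantees that the columns of each $U_i$ span all of $S_i$, and one could equivalently invoke the block structure of Lemma \ref{lemmaUdagger} to see the same thing. Everything else is standard linear algebra, with the rank hypothesis doing the real work by pinning the two ends of the chain together.
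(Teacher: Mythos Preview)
Your proof is correct and follows essentially the same dimension-counting approach as the paper: both arguments use the chain $\sum_i d_i=\rank(U)=\rank(PU)\le\sum_i\dim P(S_i)\le\sum_i d_i$ to force all inequalities to be equalities. Your version is in fact more explicit about deriving $\dim P(S_i)=d_i$ from the pinching, whereas the paper states $\rank(PU_i)=d_i$ as if directly ``from the hypothesis''; your discussion of where genericity actually enters is also a nice clarification.
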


\begin{proof}
Let $S=S_1+\dots+S_M$.  Let $d_i=\dim (S_i)$, and $d=\dim (S)$. From the hypothesis, we have that $\rank (PU_i)=d_i$, and $\rank (PU)=d=\sum\limits_{i=1}^M d_i$. Therefore, there are $d$ linearly independent vectors for $P(S)$ in the columns of $PU$.  Moreover, since  $PU=[PU_1,\dots,PU_M]$, there exist  $d_i$ linearly independent vectors from the columns of $PU_i$ for $P(S_i)$.  Thus, $\dim P(S)=d=\sum_id_i=\sum_i\dim P(S_i)$, and the conclusion of the lemma follows.\end{proof}

The next few facts which will be needed come from basic graph theory.  Suppose $G=(V,E)$ is a finite, undirected, weighted graph with vertices in the set $V=\{v_1,\dots,v_k\}$ and edges $E$.  The geodesic distance between two vertices $v_i,v_j\in V$ is the length (i.e. number of edges) of the shortest path connecting $v_i$ and $v_j$, and the {\em diameter} of the graph $G$ is the maximum of the pairwise geodesic distances of the vertices.  To each weighted graph is associated an adjacency matrix, $A$, such that $A(i,j)$ is nonzero if there is an edge between the vertices $v_i$ and $v_j$, and $0$ if not.  We call a graph {\em positively weighted} if $A(i,j)\geq0$ for all $i$ and $j$.  From these facts, we have the following lemma.

\begin{lemma}\label{LEMGraphPowerConnected}
Let $G$ be a finite, undirected, positively weighted graph with diameter $r$ such that every vertex has a self-loop, and let $A$ be its adjacency matrix.  Then $A^r(i,j)>0$ for all $i,j$.  In particular, $A^r$ is the adjacency matrix of a fully connected graph.
\end{lemma}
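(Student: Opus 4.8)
The plan is to prove the statement about graph powers and connectivity by showing that entries of $A^r$ count weighted walks of length exactly $r$, and then exploiting the self-loops together with the diameter bound to guarantee every such entry is strictly positive. First I would recall the fundamental combinatorial interpretation: for a positively weighted adjacency matrix $A$, the entry $A^\ell(i,j)$ equals the sum over all walks of length $\ell$ from $v_i$ to $v_j$ of the product of the edge weights along each walk. Since every weight is nonnegative, each such term is nonnegative, so $A^\ell(i,j)>0$ if and only if there exists at least one walk of length exactly $\ell$ from $v_i$ to $v_j$ with all positive weights along it. This reduces the matrix statement to a purely combinatorial existence question about walks.

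Next I would fix arbitrary vertices $v_i$ and $v_j$ and produce a walk of length exactly $r$ between them. Because the diameter of $G$ is $r$, there is a shortest path from $v_i$ to $v_j$ of some length $\ell \le r$, consisting of edges each of which has positive weight. This gives a positive-weight walk of length $\ell$, but I need length exactly $r$. This is precisely where the self-loop hypothesis enters: since every vertex has a self-loop, $A(v,v)>0$ for each $v$, so I can pad the walk by traversing the self-loop at $v_i$ (or at any vertex on the path) exactly $r-\ell$ additional times. Each self-loop traversal contributes a positive factor and increases the walk length by one without leaving a vertex, so concatenating the shortest path with $r-\ell$ self-loops yields a walk of length exactly $r$ from $v_i$ to $v_j$ all of whose edge weights are positive.

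It follows that the corresponding term in the sum defining $A^r(i,j)$ is strictly positive, and since all other terms are nonnegative, $A^r(i,j)>0$. As $v_i$ and $v_j$ were arbitrary (including the case $i=j$, which is handled identically by padding the trivial length-zero walk with $r$ self-loops), this establishes $A^r(i,j)>0$ for all $i,j$, so $A^r$ is the adjacency matrix of a fully connected graph, as claimed.

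I do not expect any genuine obstacle here; the argument is elementary once the walk-counting interpretation is invoked. The only point requiring a small amount of care is the bookkeeping that the walk has length \emph{exactly} $r$ rather than at most $r$, which is exactly the reason the self-loop hypothesis is indispensable. Without self-loops one could only guarantee a nonzero entry in some power $A^\ell$ with $\ell \le r$, and the diagonal padding is what converts the diameter bound into an equality-of-length statement.
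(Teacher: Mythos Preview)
Your argument is correct and complete: the walk-counting interpretation of $A^\ell(i,j)$, the diameter bound to obtain a path of length at most $r$, and the self-loop padding to reach length exactly $r$ together give the result with no gaps. The paper itself does not supply a proof of this lemma but simply cites \cite[Corollary~1]{Akram16}, so your write-up is in fact more self-contained than what appears in the paper; the underlying idea is the standard one and almost certainly coincides with what is in the cited reference.
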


\begin{proof}
See \cite[Corollary 1]{Akram16}.
\end{proof}

The following lemma connects the graph theoretic considerations with the subspace model described in the opening.

\begin{lemma} \label{lemmageneric2} Let $V=\{p_1,\dots,p_{N}\}$ be a set of generic vectors that represent data from a subspace $S$ of dimension $r$ and $N>r\ge1$. Let $Q$ be as in Theorem \ref{theoremsimilarity} for the case $\U=S$.  Finally, let $G$ be the graph whose vertices are $p_i$ and whose edges are those $p_ip_j$  such that $Q(i,j)>0$. Then $G$ is connected, and has diameter at most $r$. Moreover, $Q^r$ is the adjacency matrix of a fully connected graph.
\end{lemma}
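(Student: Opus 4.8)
The plan is to reduce the statement to a clean fact about the coefficient matrix $Y$ and then run a graph-theoretic dimension-counting argument. First I would record that the $(i,j)$ entry of $Y^{*}Y$ is, up to conjugation, $\langle y_i,y_j\rangle$, where $y_i$ denotes the $i$-th column of $Y$; hence in either the binary or the absolute-value version, $Q(i,j)>0$ if and only if $\langle y_i,y_j\rangle\neq 0$. Thus the edges of $G$ are exactly the non-orthogonal pairs of coefficient vectors, and $Q$ is symmetric with strictly positive diagonal (each $y_i\neq0$), so $G$ has a self-loop at every vertex. The conceptual crux of the reduction is to observe that the columns of $Y$ are themselves generic for $T:=\Span\{y_1,\dots,y_N\}$, which has dimension $r$: since $\W=CY$ with $\rank C=r$ one gets $\rank Y=r$, and if any $r$ of the $y_i$ were dependent then applying $C$ would make the corresponding $r$ columns of $\W$ dependent, contradicting genericity of the $p_i$ for $S$. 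So everything reduces to $N>r$ generic vectors in an $r$-dimensional space, and the recurring tool is: if a vertex set $X$ has no edges to a set $W$, then $T_X\perp T_W$ (writing $T_X:=\Span\{y_i:i\in X\}$), and since both lie in $T$ the sum is direct, giving $\dim T_X+\dim T_W\le r$.

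For connectivity I would argue by contradiction: a disconnected $G$ splits the vertices into nonempty $A\sqcup B$ with no crossing edges, so $T_A\perp T_B$ and $\dim T_A+\dim T_B\le r$. Genericity gives $\dim T_X=\min(|X|,r)$, and a short case check ($|A|,|B|\ge1$, $|A|+|B|=N>r$, $r\ge1$) shows $\min(|A|,r)+\min(|B|,r)>r$ in every case, a contradiction. Hence $G$ is connected.

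For the diameter bound, fix a vertex $u$, let $A_j$ be the ball of radius $j$ about $u$, and set $T_j:=T_{A_j}$. I would show $\dim T_j$ strictly increases until the ball exhausts the graph. The mechanism: vertices at distance $>j+1$ have no edge to $A_j$, so their span is orthogonal to $T_j$; if $\dim T_{j+1}=\dim T_j$ while $A_{j+1}$ is not yet everything, then (distances from $u$ occur without gaps) there is a vertex $w$ at distance $j+2$ adjacent to some $z$ at distance $j+1$, and $y_z\in T_{j+1}=T_j$ together with $y_w\perp T_j$ would force $\langle y_w,y_z\rangle=0$, contradicting the edge $w\sim z$. Thus $\dim T_{j+1}\ge\dim T_j+1$ for each $j$ up to $\mathrm{ecc}(u)-2$; starting from $\dim T_0=1$ this yields $\dim T_{\mathrm{ecc}(u)-1}\ge\mathrm{ecc}(u)$, and since every $T_j\subseteq T$ has dimension $\le r$ we conclude $\mathrm{ecc}(u)\le r$. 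As $u$ was arbitrary, the diameter is at most $r$.

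Finally, $Q$ is a positively weighted adjacency matrix of a graph with a self-loop at every vertex and diameter $d\le r$, so Lemma \ref{LEMGraphPowerConnected} gives that $Q^{d}$ has all entries positive; multiplying by $Q$ (whose diagonal is positive) preserves positivity, so $Q^r$ is the adjacency matrix of a fully connected graph. I expect the main obstacle to be the strict-increase step in the diameter argument, namely pinning down the orthogonality between the ``far'' vertices and the inner ball and exhibiting the distance-$(j+2)$ vertex that produces the contradiction; the genericity reduction in the first paragraph is the other delicate point, since it is what makes the dimension counting applicable.
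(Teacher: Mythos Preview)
Your argument is correct, and the reduction in your first paragraph (passing from the data vectors $p_i$ to the coefficient columns $y_i$ and checking that the $y_i$ are themselves generic in an $r$-dimensional space) is stated more carefully than in the paper itself.  The route, however, is genuinely different.  The paper does not run a dimension count on a bipartition for connectivity, nor a growing-balls argument for the diameter.  Instead it first handles the base case $N=r+1$: if $G$ were disconnected, any component $C$ and its complement each contain at most $r$ vertices, so genericity makes each side linearly independent, and the orthogonality between the two sides manufactures $r+1$ independent vectors in an $r$-dimensional space.  For general $N$, given any two vertices $p,q$, the paper selects any $r-1$ other vertices; since the full set is generic, this $(r+1)$-element subset is still generic, so the induced subgraph is connected by the base case, yielding a $p$--$q$ path of length at most $r$.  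Thus connectivity and the diameter bound fall out simultaneously from the same subgraph trick.

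The trade-off: the paper's argument is shorter once the base case is in hand, since the diameter estimate comes for free, but it requires the (slightly slippery) step of restricting to a well-chosen $(r+1)$-vertex subgraph.  Your approach avoids that restriction entirely and works directly with the full graph via two clean dimension counts; the eccentricity argument in particular is self-contained and would generalize more readily to settings where passing to a subgraph is awkward.  Either way, the final step invoking Lemma~\ref{LEMGraphPowerConnected} and propagating positivity through the remaining powers of $Q$ is the same.
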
 

\begin{proof}
The proof is essentially contained in \cite[Lemmas 2 and 3]{Akram16}, but for completeness is presented here.

First, to see that $G$ is connected, first consider the base case when $N=r+1$, and let $C$ be a non-empty set of vertices of a connected component of $G$.  Suppose by way of contradiction that $C^C$ contains $1\leq k\leq r$ vertices.  Then since $N=r+1$, we have that $|C^C|\leq r$, and hence the vectors $\{p_j\}_{j\in C^C}$ are linearly independent by the generic assumption on the data.  On the other hand, $|C|\leq r$, so the vectors $\{p_i\}_{i\in C}$ are also linearly independent. But by construction $\bracket{p,q}=0$ for any $p\in C$ and $q\in C^C$, hence the set $\{p_i\}_{i\in C\cup C^C}$ is a linearly independent set of $r+1$ vectors in $S$ which contradicts the assumption that $S$ has dimension $r$.  Consequently, $C^C$ is empty, i.e. $G$ is connected.

For generic $N>r$, suppose $p\neq q$ are arbitrary elements of $V$.  It suffices to show that there exists a path connecting $p$ to $q$.  Since $N>r$ and $V$ is generic, there exists a set $V_0\subset V\setminus\{p,q\}$ of cardinality $r-1$ such that $\{p,q\}\cup V_0$ is a set of linearly independent vectors in $S$.  This is a subgraph of $r+1$ vertices of $G$ which satisfies the conditions of the base case when $N=r+1$, and so this subgraph is connected.  Hence, there is a path connecting $p$ and $q$, and since these were arbitrary, we can conclude that $G$ is connected.

Finally, note that in the proof of the previous step for general $N>r$, we actually obtain that there is a path of length at most $r$ connecting any two arbitrary vertices $p,q\in V$.  Thus, the diameter of $G$ is at most $r$.  The moreover statement follows directly from Lemma \ref{LEMGraphPowerConnected}, and so the proof is complete.
\end{proof}

\subsection{Proof of Theorem \ref{theoremframes}}

Without loss of generality, we assume that $\W$ is such that $\W=[\W_1 \dots \W_M]$ where $\W_i$ is an $m\times n_i$ matrix for each $i=1,\dots,M$ and $\sum\limits_i^M n_i=n$, and the vector columns of $\W_i$ come from the subspace $S_i$. Under this assumption, we will show that $Y$ is a block diagonal matrix.

Let $P$ be the row selection matrix such that $P\W=R$; in particular, note that $P$ maps $\R^m$ to $\R^s$, and that because of the structure of $\W$, we may write $R = [R_1 \dots R_M]$ where the columns of $R_i$ belong to the subspace $\tilde S_i:=P(S_i)$.  Note also that the columns of each $R_i$ are generic for the subspace $\tilde S_i$ on account of Lemma \ref{LEMGeneric}, and that the subspaces $\tilde S_i$ are independent by Lemma \ref{LEMIndependent}.  Additionally, since $U$ consists of certain columns of $R$, and $\rank(U)=\rank(R)=\rank(\W)$ by assumption, we have that $U=[U_1\;\dots\; U_M]$ where the columns of $U_i$ are in $\tilde S_i$.

Next, recall from the proof of the CUR decomposition that  $Y=U^\dagger R$ is a solution to $R=UX$; thus $R = UY$.  Suppose that $r$ is a column of $R_1$, and let $y = [y_1\; y_2\; \dots\; y_M]^*$ be the corresponding column of $Y$ so that $r=Uy$.  Then we have that $r = \sum_{j=1}^M U_jy_j$, and in particular, since $r$ is in $R_1$,
\[(U_1y_1-r) + U_2y_2+\dots +U_My_M=0,\]
whereupon the independence of the subspaces $\tilde S_i$ implies that $U_1y_1=r$ and $U_iy_i=0$ for every $i=2,\dots,M$.  On the other hand, note that $\tilde y = [y_1\; 0 \;\dots \;0]^*$ is another solution; i.e. $r=U\tilde y$.  Recalling that $U^\dagger y$ is the minimal-norm solution to the problem $r=Uy$, we must have that
\[\|y\|_2^2=\sum_{i=1}^M\|y_i\|_2^2\leq\|\tilde y\|_2^2 = \|y_1\|_2^2.\]
Consequently, $y=\tilde y$, and it follows that $Y$ is block diagonal by applying the same argument for columns of $R_i$, $i=2,\dots,M$.

\hfill\qed

\subsection{Proof of Theorem \ref{theoremsimilarity}}

Without loss of generality, on account of Theorem \ref{theoremframes}, we may assume that $Y$ is block diagonal as above.  Then we first demonstrate that each block $Y_i$ has rank $d_i=\dim(S_i)$ and has columns which are generic.  Since $R_i = U_iY_i$, and $\rank(R_i)=\rank(U_i)=d_i$, we have $\rank(Y_i)\geq d_i$ since the rank of a product is at most the minimum of the ranks.  On the other hand, since $Y_i=U^\dagger R_i$, $\rank(Y_i)\leq \rank(R_i)=d_i$, whence $\rank(Y_i)=d_i$.  To see that the columns of each $Y_i$ are generic, let $y_1,\dots, y_{d_i}$ be $d_i$ columns in $Y_i$ and suppose there are constants $c_1,\dots,c_{d_i}$ such that $\sum_{j=1}^{d_i} c_jy_j=0.$  It follows from the block diagonal structure of $Y$ that
\[0 = U_i\left(\sum_{j=1}^{d_i}c_jy_j\right)=\sum_{j=1}^{d_i}c_jU_iy_j = \sum_{j=1}^{d_i}c_jr_j,\] where $r_j$, $j=1,\dots,d_i$ are the columns of $R_i$.  Since the columns of $R_i$ are generic by Lemma \ref{LEMGeneric}, it follows that $c_i=0$ for all $i$, whence the columns of $Y_i$ are generic.

Now $Q=Y^*Y$ is an $n\times n$ block diagonal matrix whose blocks are given by $Q_i=Y_i^*Y_i$, $i=1,\dots,M$, and we may consider each block as the adjacency matrix of a graph as prescribed in Lemma \ref{LEMGraphPowerConnected}.  Thus from the conclusion there, each block gives a connected graph with diameter $d_i$, and so $Q^{d_{\max}}$ will give rise to a graph with $M$ distinct fully connected components, where the graph arising from $Q_i$ corresponds to the data in $\W$ drawn from $S_i$.  Thus $Q^{d_{\max}}$ is indeed a similarity matrix for $\W$. 

\hfill\qed

\subsection{Proofs of Corollaries}\label{SECCor}

\begin{proof}[Proof of Corollary \ref{corollarySIM}]
That $\Xi_W$ is a similarity matrix follows directly from Theorem \ref{theoremsimilarity}.   To see the moreover statement, recall that $\W^\dagger=V_r\Sigma_r^\dagger U_r^*$, whence $\W^\dagger \W = V_r\Sigma_r^\dagger U_rU_r^*\Sigma_r V_r^*=V_r\Sigma_r^\dagger\Sigma_r V_r^* =V_rV_r^*$. 
\end{proof}

\begin{proof}[Proof of Corollary \ref{corollaryRR}]
By Lemma \ref{lemmaUdagger}, we may write $\W=BZ$, where $Z$ is block diagonal, and $B = [B_1 \;\dots\; B_M]$ with the columns of $B_i$ being a basis for $S_i$.  Let $P$ be the row-selection matrix which gives $R$, i.e. $R=P\W$.  Then $R=PBZ$.  The columns of $B$ are linearly independent (and likewise for the columns of $PB$ by Lemma \ref{LEMIndependent}), whence $\W^\dagger = Z^\dagger B^\dagger$, and $R^\dagger = Z^\dagger (PB)^\dagger$.  Moreover, linear independence of the columns also implies that $B^\dagger B$ and $(PB)^\dagger PB$ are both identity matrices of the appropriate size, whereby
\[ R^\dagger R = Z^\dagger (PB)^\dagger PB Z = Z^\dagger Z = Z^\dagger B^\dagger BZ =\W^\dagger \W,\] which completes the proof (note that the final identity $\W^\dagger \W=V_rV_r^*$ follows immediately from Corollary \ref{corollarySIM}).
\end{proof}

\section{Experimental Results}\label{SECExperiments}

\subsection{A proto-algorithm for noisy subspace data}

Up to this point, we have solely considered the noise-free case, when $\W$ contains uncorrupted columns of data lying in a union of subspaces satisfying the assumptions in Section \ref{SECAssumptions}.  We now depart from this to consider the more interesting case when the data is noisy as is typical in applications.  Particularly, we assume that a data matrix $\W$ is a small perturbation of an ideal data matrix, so that the columns of $\W$ approximately come from a union of subspaces; that is, $w_i = u_i+\eta_i$ where $u_i\in \U$ and $\eta_i$ is a small noise vector.  For our experimentation, we limit our considerations to the case of motion data.  

In motion segmentation, one begins with a video, and some sort of algorithm which extracts features on moving objects and tracks the positions of those features over time.  At the end of the video, one obtains a data matrix of size $2F\times N$ where $F$ is the number of frames in the video and $N$ is the number of features tracked.  Each vector corresponds to the trajectory of a fixed feature, i.e. is of the form $(x_1,y_1,\dots,x_F,y_F)^T$, where $(x_i,y_i)$ is the position of the feature at frame $1\leq i\leq F$.  Even though these trajectories are vectors in a high dimensional ambient space $\R^{2F}$, it is known that the trajectories of all feature belonging to the same rigid body lie in a subspace of dimension $4$ \cite{Kanatani02}. Consequently, motion segmentation is a suitable practical problem to tackle in order to verify the validity of the proposed approach.

\begin{algorithm}
\SetAlgorithmName{Proto-algorithm}{proto}{}
\SetAlgoRefName{}
\caption{CUR Subspace Clustering}
  \KwIn{A data matrix $\W=[w_1 \dotsb w_N] \in \mathbb{K}^{m \times n}$, expected number of subspaces M, and number of trials $k$.}
  \KwOut{Subspace (cluster) labels}

\For{ i = 1 to k}{
  Find approximate CUR factorization of $\W$, $\W\approx C_iU_i^\dagger R_i$\\
  $Y_i = U_i^\dagger R_i$\\
  Threshold $Y_i$\\
  Compute $\Xi_\W^{(i)} = Y_i^*Y_i$\\
  Enforce known connections\\
}
$\Xi_\W = \text{abs}(\text{median}(\Xi_\W^{(1)},\dots,\Xi_\W^{(k)}))$\\
Cluster the columns of $\Xi_\W$\\
\Return cluster labels
\label{protoalgorithm}
\end{algorithm}

It should be evident that the reason for the term proto-algorithm is that several of the steps admit some ambiguity in their implementation.  For instance, in line 2, how should one choose a CUR approximation to $\W$?  Many different ways of choosing columns and rows have been explored, e.g. by Drineas et. al. \cite{Drineas06III}, who choose rows and columns randomly according to a probability distribution favoring large columns or rows, respectively.  On the other hand, Demanet and Chiu \cite{Demanet13} show that uniformly selecting columns and rows can also perform quite well.  In our subsequent experiments, we choose to select rows and columns uniformly at random.  

There is yet more ambiguity in this step though, in that one has the flexibility to choose different numbers of columns and rows (as long as each number is at least the rank of $\W$).  Our choice of the number of rows and columns will be discussed in the next subsections.  

Thirdly, the thresholding step in line 4 of the proto-algorithm deserves some attention.  In experimentation, we tried a number of thresholds, many of which were based on the singular values of the data matrix $\W$.  However, the threshold that worked the best for motion data was a volumetric one based on the form of $Y_i$ guaranteed by Theorem \ref{theoremframes}.  Indeed, if there are $M$ subspaces each of the same dimension, then $Y_i$ should have precisely a proportion of $1-\frac1M$ of its total entries being $0$.  Thus our thresholding function orders the entries in descending order and keeps the top $(1-\frac1M)\times$ (total \# of entries), and sets the rest to 0.

Line 7 is simply a way to use any known information about the data to assist in the final clustering.  Typically, no prior information is assumed other than the fact that we obviously know that $w_i$ is in the same subspace as itself.  Consequently here, we force the diagonal entries of $\Xi_i$ to be $1$ to emphasize that this connection is guaranteed.

The reason for the appearance of an averaging step in line 8 is that, while a similarity matrix formed from a single CUR approximation of a data matrix may contain many errors, this behavior should average out to give the correct clusters.  The entrywise median of the family $\{\Xi_\W^{(i)}\}_{i=1}^k$ is used rather than the mean to ensure more robustness to outliers.  Note that a similar method of averaging CUR approximations was evaluated for denoising matrices in \cite{Sekmen2017matrix}; additional methods of matrix factorizations in matrix (or image) denoising may be found in \cite{Muhammad18}, for example.

Note also that we do not take powers of the matrix $Y_i^* Y_i$ as suggested by Theorem \ref{theoremsimilarity}.  The reason for this is that when noise is added to the similarity matrix, taking the matrix product multiplicatively enhances the noise and greatly degrades the performance of the clustering algorithm. There is good evidence that taking elementwise products of a noisy similarity matrix can improve performance \cite{Schnorr09,Ji15}.  This is really a different methodology than taking matrix powers, so we leave this discussion until later.

Finally, the clustering step in line 10 is there because at that point, we have a non-ideal similarity matrix $\Xi_\W$, meaning that it will not exactly give the clusters because there are few entries which are actually $0$.  However, each column should ideally have large $(i,j)$ entry whenever $w_i$ and $w_j$ are in the same subspace, and small (in absolute value) entries whenever $w_i$ and $w_j$ are not.  This situation mimics what is done in Spectral Clustering, in that the matrix obtained from the first part of the Spectral Clustering algorithm does not have rows which lie exactly on the coordinate axes given by the eigenvectors of the graph Laplacian; however hopefully they are small perturbations of such vectors and a basic clustering algorithm like $k$--means can give the correct clusters.  We discuss the performance of several different basic clustering algorithms used in this step in the sequel.

In the remainder of this section, the proto-algorithm above is investigated by first considering its performance on synthetic data, whereupon the initial findings are subsequently verified using the motion segmentation dataset known as Hopkins155 \cite{Vidal07}.

\subsection{Simulations using Synthetic Data}

A set of simulations are designed using synthetic data. In order for the results to be comparable to that of the motion segmentation case presented in the following section, the data is constructed in a similar fashion.  Particularly, in the synthetic experiments, data comes from the union of independent $4$ dimensional subspaces of $\R^{300}$.  This corresponds to a feature being tracked for $5$ seconds in a $30$~fps video stream. Two cases similar to the ones in Hopkins155 dataset are investigated for increasing levels of noise. In the first case, two subspaces of dimension 4 are randomly generated, while in the second case three subspaces are generated. In both cases, the data is randomly drawn from the unit ball of the subspaces.  In both cases, the level of noise on $\W$ is gradually increased from the initial noiseless state to the maximum noise level. The entries of the noise are i.i.d.  $\mathcal{N}(0,\sigma^{2})$ random variables (i.e. with zero-mean and variance $\sigma^2$), where the variance increases as $\sigma = [0.000, ~0.001, ~0.010, ~0.030, ~0.050, ~0.075, ~0.10]$.

In each case, for each noise level, $100$ data matrices are randomly generated containing $50$ data vectors in each subspace. Once each data matrix $\W$ is formed, a similarity matrix $\Xi_{\W}$ is generated using the proto-algorithm for CUR subspace clustering.  The parameters used are as follows: in the CUR approximation step (line 2), we choose all columns and the expected rank number of rows.  Therefore, the matrix $Y$ of Theorem \ref{theoremsimilarity} is of the form $R^\dagger R$. The expected rank number of rows (i.e. the number of subspaces times 4) are chosen uniformly at random from $\W$, and it is ensured that $R$ has the expected rank before proceeding; the thresholding step (line 4) utilizes the volumetric threshold described in the previous subsection, so in Case 1 we eliminate half the entries of $Y$, while in Case 2 we eliminate $2/3$ of the entries; we set $k=25$, i.e. we compute $25$ separate similarity matrices for each of the $100$ data matrices and subsequently take the entrywise median of the 25 similarity matrices (line 8) -- extensive testing shows no real improvement for larger values of $k$, so this value was settled on empirically; finally, we utilized three different clustering algorithms in line 10: $k$--means, Spectral Clustering \cite{Luxburg07}, and Principal Coordinate Clustering (PCC) \cite{PCC}.  To spare unnecessary clutter, we only display the results of the best clustering algorithm in Figure \ref{fig:SyntheticCases_n1}, which turns out to be PCC, and simply state here that using Matlab's $k$--means algorithm gives very poor performance even at low noise levels, while Spectral Clustering gives good performance for low noise levels but degrades rapidly after about $\sigma=0.05$.  More specifically, $k$--means has a range of errors from $0-50$\% even for the $\sigma=0$ in Case 2, while having an average of $36$\% and $48$\% clustering error in Case 1 and 2, respectively for the maximal noise level $\sigma=0.1$.  Meanwhile, Spectral Clustering exhibited on average perfect classification up to noise level $\sigma=0.03$ in both cases, but jumped rapidly to an average of  12\% and 40\% error in Case 1 and 2, respectively for the case $\sigma=0.1$.  Figure \ref{fig:SyntheticCases_n1} below shows the error plots for both cases when utilizing PCC. For a full description of PCC the reader is invited to consult \cite{PCC}, but essentially it eliminates the use of the graph Laplacian as in Spectral Clustering, and instead uses the principal coordinates of the first few left singular vectors in the singular value decomposition of $\Xi_\W$.  Namely, if $\Xi_\W$ has skinny SVD of order $M$ (the number of subspaces) of $U_M\Sigma_M V_M^*$, then PCC clusters the rows of $\Sigma_M V_M^*$ using $k$--means.

Results for Case 1 and Case 2 using PCC are given in Figure~\ref{fig:SyntheticCases_n25}.  For illustration, Figure~\ref{fig:SyntheticCases_n1} shows the performance of a single CUR decomposition to form $\Xi_\W$.  As expected, the randomness involved in finding a CUR decomposition plays a highly nontrivial role in the clustering algorithm.  It is remarkable to note, however, the enormity of the improvement the averaging procedure gives for higher noise levels, as can be seen by comparing the two figures.

\begin{figure}[h!]
\centering
\begin{subfigure}{.45\textwidth}
  \centering
  \includegraphics[width=1\linewidth]{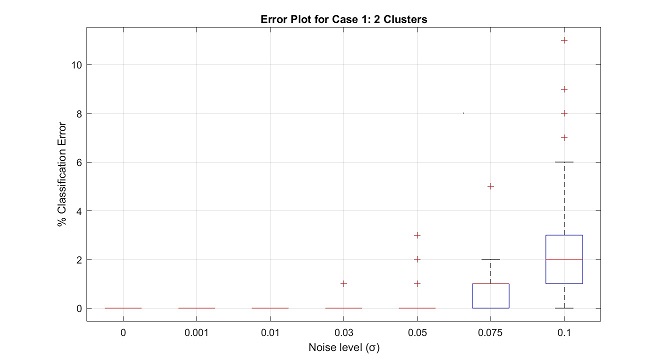}
  \caption{Case 1}
  \label{fig:SyntheticCase1n25}
\end{subfigure}
\begin{subfigure}{.45\textwidth}
  \centering
  \includegraphics[width=1\linewidth]{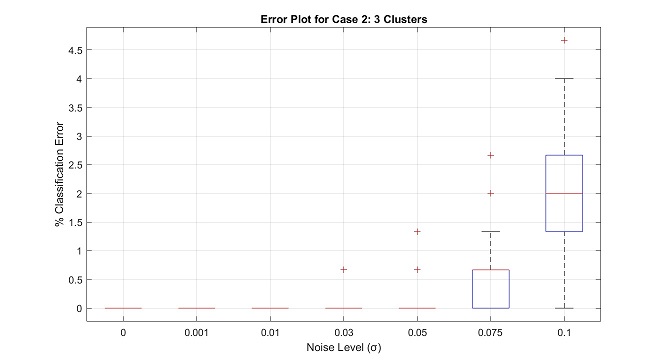}
  \caption{Case 2}
  \label{fig:SyntheticCase2n25}
\end{subfigure}
\caption{Synthetic Cases 1 and 2 for $\Xi_{\W}$ calculated using the median of $25$ CUR decompositions}
\label{fig:SyntheticCases_n25}
\end{figure}

\begin{figure}[h!]
\centering
\begin{subfigure}{.45\textwidth}
  \centering
  \includegraphics[width=1\linewidth]{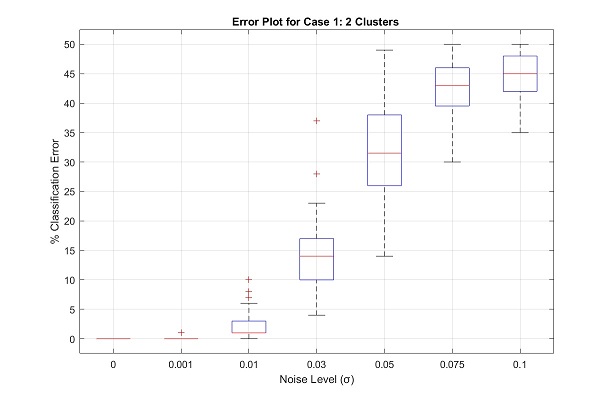}
  \caption{Case 1}
  \label{fig:SyntheticCase1}
\end{subfigure}
\begin{subfigure}{.45\textwidth}
  \centering
  \includegraphics[width=1\linewidth]{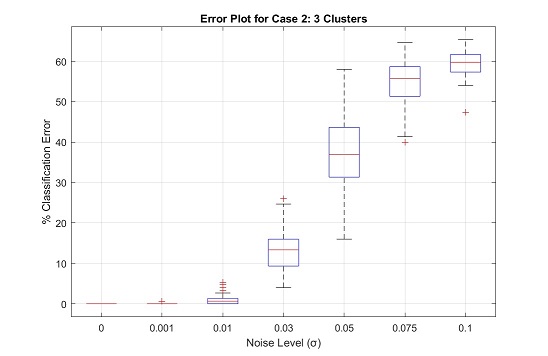}
  \caption{Case 2}
  \label{fig:SyntheticCase2}
\end{subfigure}
\caption{Synthetic Cases 1 and 2 for $\Xi_{\W}$ calculated using a single CUR decomposition}
\label{fig:SyntheticCases_n1}
\end{figure}

\subsection{Motion Segmentation Dataset: Hopkins155}

The Hopkins155 motion dataset contains 155 videos which can be broken down into several categories: checkerboard sequences where moving objects are overlaid with a checkerboard pattern to obtain many feature points on each moving object, traffic sequences, and articulated motion (such as people walking) where the moving body contains joints of some kind making the 4-dimensional subspace assumption on the trajectories incorrect.  Associated with each video is a data matrix giving the trajectories of all features on the moving objects (these features are fixed in advance for easy comparison).  Consequently, the data matrix is unique for each video, and the ground truth for clustering is known {\em a priori}, thus allowing calculation of the clustering error, which is simply the percentage of incorrectly clustered feature points.  An example of a still frame from one of the videos in Hopkins155 is shown in Figure \ref{FIG:Hopkins}.  Here we are focused only on the clustering problem for motion data; however, there are many works on classifying motions in video sequences which are of a different flavor, e.g. \cite{ChoMotion,Khan18,Kirby18}.

\begin{figure}
\centering
\includegraphics[scale=0.3]{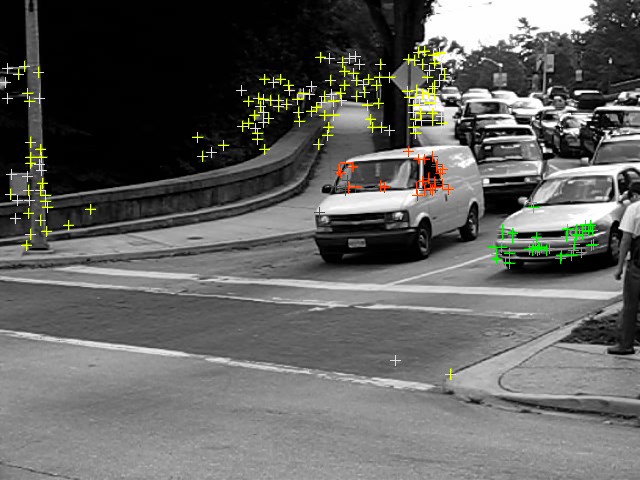}\quad \includegraphics[scale=0.3]{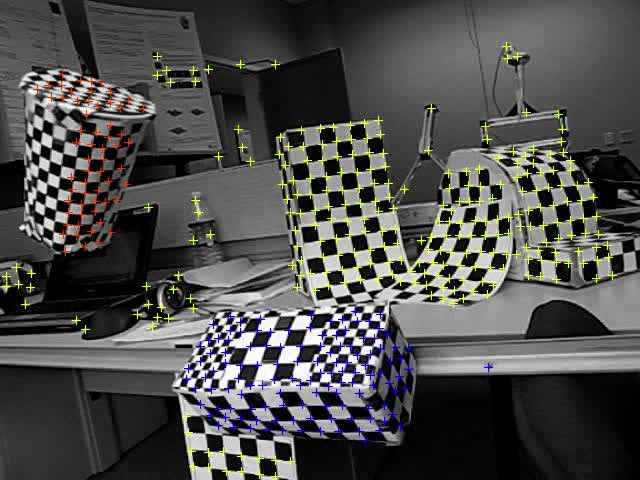}
\caption{ Example stills from a car sequence (left) and checkerboard sequence (right) from the Hopkins155 motion dataset.}\label{FIG:Hopkins}
\end{figure}

As mentioned, clustering performance using CUR decompositions is tested using the Hopkins155 dataset. 
In this set of experiments, we use essentially the same parameters as discussed in the previous subsection when testing synthetic data.  That is, we use CUR approximations of the form $\W R^\dagger R$ where exactly the expected rank number of rows are chosen uniformly at random, the volumetric threshold is used, and we take the median of 50 similarity matrices for each of the 155 data matrices (we use $k=50$ here instead of 25 to achieve a more robust performance on the real dataset).  Again, PCC is favored over $k$--means and Spectral Clustering due to a dramatic increase in performance. 
 
It turns out that for real motion data, CUR yields better overall results than SIM, the other pure similarity matrix method.  This is reasonable given the flexibility of the CUR decomposition.  Finding several similarity matrices and averaging them has the effect of averaging out some of the inherent noise in the data.  The classification errors for our method as well as many benchmarks appear in Tables \ref{tab:twomotion}--\ref{tab:overall}.  Due to the variability of any single trial over the Hopkins155 dataset, the CUR data presented in the tables is the average of 20 runs of the algorithm over the entire dataset.  The purpose of this work is not to fine tune CUR's performance on the Hopkins155 dataset; nonetheless, the results using this simple method are already better than many of those in the literature.

\begin{table}[h]
\centering
\scriptsize
		\caption{\% classification errors for sequences with two motions.}
		\label{tab:twomotion}
		\begin{tabular}{||c||cccccccc||cc||}
					\hline
				\textbf{\em Checker (78)} &GPCA&LSA&RANSAC&MSL&ALC&SSC-B&SSC-N&NLS&SIM&CUR\\
					\hline \hline
				Average & 6.09\%    & 2.57\%& 6.52\% &4.46\% & 1.55\%   & 0.83\%  & 1.12\% & 0.23\% & 3.52\% & 0.94\% \\
				Median & 1.03\%    & 0.27\%& 1.75\% &0.00\% & 0.29\%   & 0.00\%  & 0.00\% & 0.00\%  & 0.80\% & 0.00\% \\
					\hline \hline
				\textbf{\em Traffic (31)}   &GPCA&LSA&RANSAC&MSL&ALC&SSC-B&SSC-N&NLS&SIM&CUR\\
					\hline \hline
				Average & 1.41\%     & 5.43\%& 2.55\% &2.23\% & 1.59\%   & 0.23\%  & 0.02\% & 1.40\% & 8.46\% & 1.08\% \\
				Median & 0.00\%    & 1.48\%& 0.21\% &0.00\% & 1.17\%   & 0.00\%  & 0.00\% & 0.00\% & 1.95\% & 0.05\% \\
					\hline
						\hline \hline
				\textbf{\em Articulated (11)} &GPCA&LSA&RANSAC&MSL&ALC&SSC-B&SSC-N&NLS&SIM&CUR\\
					\hline \hline
				Average & 2.88\%    & 4.10\%& 7.25\% &7.23\% & 10.70\%   & 1.63\%  & 0.62\% & 1.77\%  & 6.00\% & 6.36\% \\ 
				Median & 0.00\%   & 1.22\%& 2.64\% &0.00\% & 0.95\%   & 0.00\%  & 0.00\%&  0.88\% & 0.43\% & 0.00\% \\
				\hline
					\hline \hline
				\textbf{\em All (120 seq)} &GPCA&LSA&RANSAC&MSL&ALC&SSC-B&SSC-N&NLS&SIM&CUR\\
					\hline \hline
				Average & 4.59\%     & 3.45\%& 5.56\% &4.14\% & 2.40\%   & 0.75\%  & 0.82\%&  0.57\%  & 5.03\% & 1.47\% \\
				Median & 0.38\%   & 0.59\%& 1.18\% &0.00\% & 0.43\%   & 0.00\%  & 0.00\%&  0.00\% & 1.07\% & 0.00\% \\
				\hline
		\end{tabular}
\end{table}

\begin{table}[h]
\centering
\scriptsize
		\caption{\% classification errors for sequences with three motions.}
		\label{tab:threemotion}
		\begin{tabular}{||c||cccccccc||cc||}
					\hline
				\textbf{\em Checker (26)} &GPCA&LSA&RANSAC&MSL&ALC&SSC-B&SSC-N&NLS&SIM&CUR \\
					\hline \hline
				Average & 31.95\%    & 5.80\%& 25.78\% &10.38\% & 5.20\%   & 4.49\%  & 2.97\%&  0.87\%  & 8.26\% & 3.25\% \\
				Median & 32.93\%    & 1.77\%& 26.00\% &4.61\% & 0.67\%   & 0.54\%  & 0.27\%&  0.35\% & 2.16\% & 1.05\% \\ 
					\hline \hline
				\textbf{\em Traffic (7)}   &GPCA&LSA&RANSAC&MSL&ALC&SSC-B&SSC-N&NLS&SIM&CUR\\
					\hline \hline
				Average & 19.83\%  & 25.07\%& 12.83\% &1.80\% & 7.75\%   & 0.61\%  & 0.58\%&  1.86\%  & 16.59\% & 3.57\% \\
				Median & 19.55\%    & 23.79\%& 11.45\% &0.00\% & 0.49\%   & 0.00\%  & 0.00\%& 1.53\%  & 10.33\% & 0.72\% \\
					\hline
						\hline \hline
				\textbf{\em Articulated (2)} &GPCA&LSA&RANSAC&MSL&ALC&SSC-B&SSC-N&NLS&SIM&CUR\\
					\hline \hline
				Average & 16.85\%    & 7.25\%& 21.38\% &2.71\% & 21.08\%   & 1.60\%  & 1.60\%&  5.12\%  & 18.42\% & 8.80\% \\
				Median & 16.85\%   & 7.25\%& 21.38\% &2.71\% & 21.08\%   & 1.60\%  & 1.60\%&  5.12\%  & 18.42\% & 8.80\% \\
				\hline
					\hline \hline
				\textbf{\em All (35 seq)} &GPCA&LSA&RANSAC&MSL&ALC&SSC-B&SSC-N&NLS&SIM&CUR\\
					\hline \hline
				Average & 28.66\%     & 9.73\%& 22.94\% &8.23\% & 6.69\%   & 3.55\%  & 2.45\%&  1.31\%  & 10.51\% & 3.63\% \\
				Median & 28.26\%   & 2.33\%& 22.03\% &1.76\% & 0.67\%   & 0.25\%  & 0.20\%&  0.45\%  & 4.46\% & 0.92\% \\
				\hline
		\end{tabular}
\end{table}
\begin{table}[h!]
\centering
\scriptsize
		\caption{\% classification errors for all sequences.}
		\label{tab:overall}

		\begin{tabular}{||c||cccccccc||cc||}
					\hline
				\textbf{\em All (155 seq)} &GPCA&LSA&RANSAC&MSL&ALC&SSC-B&SSC-N&NLS&SIM&CUR\\
					\hline \hline
				Average & 10.34\%     & 4.94\%& 9.76\% &5.03\% & 3.56\%   & 1.45\%  & 1.24\%&  0.76\%  & 6.26\% & 1.96\% \\
				Median & 2.54\%   & 0.90\%& 3.21\% &0.00\% & 0.50\%   & 0.00\%  & 0.00\%&  0.20\% & 1.33\% & 0.07\% \\
				\hline
		\end{tabular}
\end{table}

To better compare performance, we timed the CUR-based method described above in comparison with some of the benchmark timings given in \cite{Vidal07}.  For direct comparison, we ran all of the algorithms on the same machine, which was a mid-2012 Macbook Pro with 2.6GHz Intel Core i7 processor and 8GB of RAM.  The results are listed in Table \ref{TAB:Time}.  We report the times for various values of $k$ as in Step 1 of the Proto-algorithm, indicating how many CUR approximations are averaged to compute the similarity matrix.

\begin{table}[h]
\centering
\scriptsize
		\caption{Run times (in s) over the entire Hopkins155 dataset for various algorithms.}
		\label{TAB:Time}
\begin{tabular}{||c|| c c c || c c c ||}
\hline
\textbf{Algorithm} & GPCA & LSA & RANSAC & CUR ($k=25$) & CUR ($k=50$) & CUR ($k=75$)\\ \hline \hline
Time (s) & 3.80 & 361.77 & 3.39 & 71.96 & 202.43 & 392.26\\
\hline
\end{tabular}
\end{table} 

\subsection{Discussion}

It should be noted that after thorough experimentation on noisy data, using a CUR decomposition which takes all columns and exactly the expected rank number of rows exhibits the best performance.  That is, a decomposition of the form $\W=\W R^\dagger R$ performs better on average than one of the form $\W=CU^\dagger R$.  The fact that choosing more columns performs better when the matrix $\W$ is noisy makes sense in that any representation of the form $\W=CX$ is a representation of $\W$ in terms of the frame vectors of $C$.  Consequently, choosing more columns in the matrix $C$ means that we are adding redundancy to the frame, and it is well-known to be one of the advantages of frame representations that redundancy provides greater robustness to noise.  Additionally, we noticed experimentally that choosing exactly $r$ rows in the decomposition $\W R^\dagger R$ exhibits the best performance.  It is not clear as of yet why this is the case.

As seen in the tables above, the proposed CUR based clustering algorithm works dramatically better than SIM, but does not beat the best state-of-the-art obtained by the first and fourth author in \cite{Akram12}.  More investigation is needed to determine if there is a way to utilize the CUR decomposition in a better fashion to account for the noise.  Of particular interest may be to utilized convex relaxation method recently proposed in \cite{CURTPAMI}. 

One interesting note from the results in the tables above is that, while some techniques for motion clustering work better on the checkered sequences rather than traffic sequences (e.g. NLS, SSC, and SIM), CUR seems to be blind to this difference in the type of data.  It would be interesting to explore this phenomenon further to determine if the proposed method performs uniformly for different types of motion data.  We leave this task to future research.

As a final remark, we note that the performance of CUR decreases as $k$, the number of CUR decompositions averaged to obtain the similarity matrix, increases.  The error begins to level out beyond about $k=50$, whereas the time steadily increases as $k$ does (see Table \ref{TAB:Time}).  One can easily implement an adaptive way of choosing $k$ for each data matrix rather than holding it fixed.  To test this, we implemented a simple threshold by taking, for each $i$ in the Proto-algorithm, a temporary similarity matrix $\tilde{\Xi}_{\W}^{(i)}:=\text{abs(median}(\Xi_{\W}^{(1)},\dots\Xi_{\W}^{(i)}))$.  That is, $\tilde{\Xi}_{\W}^{(i)}$ is the median of all of the CUR similarity matrices produced thus far in the for loop.  We then computed $\|\tilde\Xi_{\W}^{(i)}-\tilde\Xi_{\W}^{(i-1)}\|_2$, and compared it to a threshold (in this case $0.01$).  If the norm was less than the threshold, then we stopped the for loop, and if not we kept going, setting an absolute cap of 100 on the number of CUR decompositions used.  We found that on average, 57 CUR decompositions were used, with a minimum of 37, a maximum of 100 (the threshold value), and a standard deviation of 13.  Thus it appears that a roughly optimal region for fast time performance and good clustering accuracy is around $k=50$ to $k=60$ CUR decompositions.

\subsection{Robust CUR Similarity Matrix for Subspace Clustering}

We now turn to a modification of the Proto-algorithm discussed above.  One of the primary reasons for using the CUR decomposition as opposed to the shape interaction matrix ($V_rV_r^*$) is that the latter is not robust to noise.  However, Ji, Salzmann, and Li \cite{Ji15} proposed a robustified version of SIM, called RSIM.  The key feature of their algorithm is that they do not enforce the clustering rank beforehand, but they find a range of possible ranks, and make a similarity matrix for each rank in this range, and perform Spectral Clustering on a modification of the SIM.  Then, they keep the clustering labels from the similarity matrix for the rank $r$ which minimizes the associated minCut quantity on the graph determined by the similarity matrix.  

Recall given a weighted, undirected graph $G=(V,E)$, and a partition of its vertices, $\{A_1,\dots,A_k\}$, the Ncut value of the partition is
\[ \text{Ncut}_k(A_1,\dots,A_k) := \frac12 \sum_{i=1}^k\dfrac{W(A_i,A_i^C)}{\text{vol}(A_i)},\]
where $W(A, A^C) := \sum_{i\in A, j\in A^C}w_{i,j}$, where $w_{i,j}$ is the weight of the edge $(v_i,v_j)$ (defined to be $0$ if no such edge exists).  The RSIM method varies the rank $r$ of the SIM (i.e. the rank of the SVD taken), and minimizes the corresponding Ncut quantity over $r$.

The additional feature of the RSIM algorithm is that rather than simply taking $V_rV_r^*$ for the similarity matrix, they first normalize the rows of $V_r$ and then take elementwise power of the resulting matrix.  This follows the intuition of \cite{Schnorr09} and should be seen as a type of thresholding as in Step 4 of the Proto-algorithm.  For the full RSIM algorithm, consult \cite{Ji15}, but we present the CUR analogue that maintains most of the steps therein.

\begin{algorithm}
\caption{Robust CUR Similarity Matrix (RCUR)}
\label{ALG:RCUR}
  \KwIn{A data matrix $\W=[w_1 \dotsb w_N] \in \mathbb{K}^{m \times n}$, minimum rank $r_{\min}$ and maximum rank $r_{\max}$, number of trials $k$, and exponentiation parameter $\alpha$.}
  \KwOut{Subspace (cluster) labels and best rank $r_{\text{best}}$}
\For{$r = r_{\min}$ to $r_{\max}$}{
\For{ $i = 1$ to $k$}{
  Find approximate CUR factorization of $\W$, $\W\approx C_iU_i^\dagger R_i$\\
  $Y_i = U_i^\dagger R_i$\\
  Normalize columns of $Y_i$, call the resulting matrix $Y_i$\\
  Compute $\Xi_\W^{(i)} = Y_i^*Y_i$\\ 
}
$\Xi_\W = \text{abs}(\text{median}(\Xi_\W^{(1)},\dots,\Xi_\W^{(k)}))$\\
$(\Xi_{\W})_{i,j} = (\Xi_{\W})_{i,j}^\alpha$, i.e. take elementwise power of the similarity matrix\\
Cluster the columns of $\Xi_\W$\\
}
$r_{\text{best}} = \underset{r}{\text{argmin}}\; \text{Ncut}_r$\\
\Return cluster labels from trial $r_{\text{best}}$, and $r_{\text{best}}$.
\end{algorithm}	 

The main difference between Algorithm \ref{ALG:RCUR} and that of \cite{Ji15} is that in the latter, Steps 2--8 in Algorithm \ref{ALG:RCUR} are replaced with computing the thin SVD of order $r$, and normalizing the rows of $V_r$, and then setting $\Xi_{\W} = V_rV_r^*$.  In \cite{Ji15}, the Normalized Cuts clustering algorithm is preferred, which is also called Spectral Clustering with the symmetric normalized graph Laplacian \cite{Luxburg07}.  In our testing of RCUR, it appears that the normalization and elementwise powering steps interfere with the principal coordinate system in PC clustering; we therefore used Spectral Clustering as in the RSIM case.  Results are presented in Tables \ref{tab:twomotion-RCUR}--\ref{tab:overall-RCUR}.  Note that the values for RSIM may differ from those reported in \cite{Ji15}; there the authors only presented errors for all 2--motion sequences, all 3--motion sequences, and overall rather than considering each subclass (e.g. checkered or traffic).  We used the code from \cite{Ji15} for the values specified by their algorithm and report its performance in the tables below (in general, the values reported here are better than those in \cite{Ji15}).

\begin{table}[h]
\centering
\scriptsize
		\caption{\% classification errors for sequences with two motions.}
		\label{tab:twomotion-RCUR}
		\begin{tabular}{||c||ccccc||}
					\hline
				\textbf{\em Checker (78)} &SSC-B&SSC-N&NLS&RSIM&RCUR\\
					\hline \hline
				Average &  0.83\%  & 1.12\% & 0.23\% & 0.48\% &  \textbf{0.17\%}\\
				Median &  0.00\%  & 0.00\% & 0.00\%  & 0.00\% &  0.00\%\\
					\hline \hline
				\textbf{\em Traffic (31)}   &SSC-B&SSC-N&NLS&RSIM&RCUR\\
					\hline \hline
				Average &  0.23\%  & \textbf{0.02\%} & 1.40\% & 0.06\% &  0.09\%\\
				Median  & 0.00\%  & 0.00\% & 0.00\% & 0.00\% &  0.00\%\\
					\hline
						\hline \hline
				\textbf{\em Articulated (11)} &SSC-B&SSC-N&NLS&RSIM&RCUR\\
					\hline \hline
				Average &  1.63\%  & \textbf{0.62\%} & 1.77\%  & 1.43\% &  1.26\%\\ 
				Median &  0.00\%  & 0.00\%&  0.88\% & 0.00\% &  0.00\% \\
				\hline
					\hline \hline
				\textbf{\em All (120 seq)} &SC-B&SSC-N&NLS&RSIM&RCUR\\
					\hline \hline
				Average &  0.75\%  & 0.82\%&  0.57\%  & 0.46\% &  \textbf{0.25\%}\\
				Median & 0.00\%  & 0.00\%&  0.00\% & 0.00\% &  0.00\%\\
				\hline
		\end{tabular}
\end{table}

\begin{table}[h]
\centering
\scriptsize
		\caption{\% classification errors for sequences with three motions.}
		\label{tab:threemotion-RCUR}
		\begin{tabular}{||c||ccccc||}
					\hline
				\textbf{\em Checker (26)} &SSC-B&SSC-N&NLS&RSIM&RCUR \\
					\hline \hline
				Average &  4.49\%  & 2.97\%&  0.87\%  & 0.63\% &  \textbf{0.40\%}\\
				Median &  0.54\%  & 0.27\%&  0.35\% & 0.40\% &  \textbf{0.03\%}\\ 
					\hline \hline
				\textbf{\em Traffic (7)}   &SSC-B&SSC-N&NLS&RSIM&RCUR\\
					\hline \hline
				Average &  0.61\%  & \textbf{0.58\%}&  1.86\%  & 2.22\%  & 0.89\%\\
				Median &  0.00\%  & 0.00\%& 1.53\%  & 0.19\%  & 0.03\% \\ 
					\hline
						\hline \hline
				\textbf{\em Articulated (2)} &SSC-B&SSC-N&NLS&RSIM&RCUR\\
					\hline \hline
				Average &  \textbf{1.60\%}  & \textbf{1.60\%}&  5.12\%  & 18.95\%  & 4.81\%\\
				Median  & 1.60\%  & 1.60\%&  5.12\%  & 18.95\%  & 4.81\%\\
				\hline
					\hline \hline
				\textbf{\em All (35 seq)} &SSC-B&SSC-N&NLS&RSIM&RCUR\\
					\hline \hline
				Average & 3.55\%  & 2.45\%&  1.31\%  & 2.00\% &  \textbf{0.75\%}\\
				Median &  0.25\%  & 0.20\%&  0.45\%  & 0.43\% &  \textbf{0.00\%}\\
				\hline
		\end{tabular}
\end{table}
\begin{table}[h!]
\centering
\scriptsize
		\caption{\% classification errors for all sequences.}
		\label{tab:overall-RCUR}

		\begin{tabular}{||c||ccccc||}
					\hline
				\textbf{\em All (155 seq)} &SSC-B&SSC-N&NLS&RSIM&RCUR\\
					\hline \hline
				Average & 1.45\%  & 1.24\%&  0.76\%  & 0.81\%  & \textbf{0.36\%}\\
				Median & 0.00\%  & 0.00\%&  0.20\% & 0.00\% &  0.00\%\\
				\hline
		\end{tabular}
\end{table}

The results displayed in the table are obtained by choosing $k=50$ to be fixed (based on the analysis in the previous section), and taking CUR factorizations of the form $\W\approx C_i R_i^\dagger R_i$, where we choose $r$ rows to form $R_i$, where $r$ is given in Step 1 of the algorithm.  This is, by Corollary \ref{corollaryRR}, the theoretical equivalent of taking $V_rV_r^*$ as in RSIM.  Due to the randomness of finding CUR factorizations in Algorithm \ref{ALG:RCUR}, the algorithm was run 20 times and the average performance was reported in Tables \ref{tab:twomotion-RCUR}--\ref{tab:overall-RCUR}.  We note also that the standard deviation of the performance across the 20 trials was less than $0.5\%$ for all categories with the exception of the Articulated 3 motion category, in which case the standard deviation was large ($5.48\%$).  

As can be seen, the algorithm proposed here does not yield the best performance on all facets of the Hopkins155 dataset; however, it does achieve the best overall classification result to date with only $0.36\%$ average classification error.  As an additional note, running Algorithm \ref{ALG:RCUR} with only $k=10$ CUR factorizations used for each data matrix still yields relatively good results (total $2$--motion error of $0.41\%$, total $3$--motion error of $1.21\%$, and total error on all of Hopkins155 of $0.59\%$) while allowing for less computation time.  Preliminary tests suggest also that taking fewer rows in the CUR factorization step in Algorithm \ref{ALG:RCUR} works much better than in the version of the Proto-algorithm used in the previous sections (for instance, taking half of the available columns and $r$ rows still yields less than $0.5\%$ overall error).  However, the purpose of the current work is not to optimize all facets of Algorithm \ref{ALG:RCUR}, as much more experimentation needs to be done to determine the correct parameters for the algorithm, and it needs to be tested on a broad variety of datasets which approximately satisfy the union of subspaces assumption herein considered.  This task we leave to future work.

\section{Concluding Remarks}

The motivation of this work was truly the realization that the exact CUR decomposition of Theorem \ref{CURdecomp} can be used for the subspace clustering problem.  We demonstrated that, on top of its utility in randomized linear algebra, CUR enjoys a prominent place atop the landscape of solutions to the subspace clustering problem.  CUR provides a theoretical umbrella under which sits the known shape interaction matrix, but it also provides a bridge to other solution methods inspired by compressed sensing, i.e. those involving the solution of an $\ell_1$ minimization problem. Moreover, we believe that the utility of CUR for clustering and other applications will only increase in the future.  Below, we provide some reasons for the practical utility of CUR decompositions, particularly related to data analysis and clustering, as well as some future directions.

\underline{Benefits of CUR:}

\begin{itemize}
\item From a theoretical standpoint, the CUR decomposition of a matrix is utilizing a frame structure rather than a basis structure to factorize the matrix, and therefore enjoys a level of flexibility beyond something like the SVD.  This fact should provide utility for applications.

\item Additionally, a CUR decomposition remains faithful to the structure of the data.  For example, if the given data is sparse, then both $C$ and $R$ will be sparse, even if $U^\dagger$ is not in general.  In contrast, taking the SVD of a sparse matrix yields full matrices $U$ and $V$, in general.  

\item Often in obtaining real data, many entries may be missing or extremely corrupted.  In motion tracking, for example, it could be that some of the features are obscured from view for several frames.  Consequently, some form of matrix completion may be necessary.  On the other hand, a look at the CUR decomposition reveals that whole rows of a data matrix can be missing as long as we can still choose enough rows such that the resulting matrix $R$ has the same rank as $\W$.  
\end{itemize}

\underline{Future Directions}
 
\begin{itemize}
\item Algorithm \ref{ALG:RCUR} and other iterations of the Proto-algorithm presented here need to be further tested to determine the best way to utilize the CUR decomposition to cluster subspace data.  Currently, Algorithm \ref{ALG:RCUR} is somewhat heuristic, so a more fully understood theory concerning its performance is needed.  We note that some justification for the ideas of RSIM are given in \cite{Ji15}; however, the ideas there do not fully explain the outstanding performance of the algorithm. As commented above, one of the benefits of the Proto-algorithm discussed here is its flexibility, which provides a distinct advantage over SVD based methods.

\item  Another direction is to combine the CUR technique with sparse methods to construct algorithms that are strongly robust to noise and that allow  clustering when the data points are not drawn from a union of independent subspaces.   
\end{itemize}

\section*{Acknowledgements}
The research of  A. Sekmen, and A.B. Koku is supported by DoD Grant W911NF-15-1-0495. The research of A.~Aldroubi is supported by NSF Grant NSF/DMS 132099. The research of A.B. Koku is also supported by TUBITAK-2219-1059B191600150.  Much of the work for this article was done while K. Hamm was an assistant professor at Vanderbilt University.  In the final stage of the project, the research of K. Hamm was partially supported through the NSF TRIPODS project under grant CCF-1423411.

The authors also thank the referees for constructive comments which helped improve the quality of the manuscript.

\bibliographystyle{elsarticle-num}
\bibliography{sekmen}

\end{document}